\documentclass[sigconf]{acmart}

\usepackage{subfigure}
\usepackage{pifont}
\usepackage{bbding}
\usepackage{threeparttable}
\usepackage{colortbl}
\usepackage{graphicx}
\usepackage{makecell}
\usepackage{bbm}

\usepackage{booktabs}
\usepackage{multirow}
\usepackage{amsmath}
\usepackage{adjustbox}
\usepackage{xcolor}

\usepackage{enumitem}

\newcommand{\res}[2]{#1{\ensuremath{{\scriptscriptstyle \pm #2}}}}

\definecolor{rankone}{HTML}{FFE599}   
\definecolor{ranktwo}{HTML}{D9EAD3}   
\definecolor{rankthree}{HTML}{D0E0E3} 

\newcommand{\best}[1]{\cellcolor{rankone}#1}
\newcommand{\second}[1]{\cellcolor{ranktwo}#1}
\newcommand{\third}[1]{\cellcolor{rankthree}#1}

\DeclareRobustCommand{\bestcap}{\begingroup\setlength{\fboxsep}{1pt}\colorbox{rankone}{best}\endgroup}
\DeclareRobustCommand{\secondcap}{\begingroup\setlength{\fboxsep}{1pt}\colorbox{ranktwo}{second-best}\endgroup}
\DeclareRobustCommand{\thirdcap}{\begingroup\setlength{\fboxsep}{1pt}\colorbox{rankthree}{third-best}\endgroup}

\newcommand{\vpara}[1]{\vspace{0.05in}\noindent\textbf{#1 }}

\usepackage{amsthm}

\newtheorem{theorem}{Theorem}
\newtheorem{proposition}{Proposition}
\newtheorem{remark}{Remark}

\AtBeginDocument{%
  }

\setcopyright{acmlicensed}
\copyrightyear{2018}
\acmYear{2018}
\acmDOI{XXXXXXX.XXXXXXX}
\acmConference[Conference acronym 'XX]{Make sure to enter the correct
  conference title from your rights confirmation email}{June 03--05,
  2018}{Woodstock, NY}
\acmISBN{978-1-4503-XXXX-X/2018/06}

\acmSubmissionID{123-A56-BU3}

\begin{document}

\title{Towards One-for-All Foundation Model for\\ Attributed Graph Clustering}

\author{Yunhui Liu}
\orcid{0009-0006-3337-0886}
\affiliation{%
  \institution{State Key Laboratory for Novel Software Technology\\ Nanjing University}
  \city{Nanjing}
  \country{China}}
\email{lyhcloudy1225@gmail.com}

\author{Xudong Jin}
\author{Kang Zhang}
\affiliation{%
  \institution{State Key Laboratory for Novel Software Technology\\ Nanjing University}
  \city{Nanjing}
  \country{China}}

\author{Danshuo An}
\author{Yu Xing}
\affiliation{%
  \institution{State Key Laboratory for Novel Software Technology\\ Nanjing University}
  \city{Nanjing}
  \country{China}}

\author{Te Song}
\affiliation{%
  \institution{State Key Laboratory for Novel Software Technology\\ Nanjing University}
  \city{Nanjing}
  \country{China}}

\author{Jia Liu}
\affiliation{%
  \institution{State Key Laboratory for Novel Software Technology\\ Nanjing University}
  \city{Nanjing}
  \country{China}}

\author{Tieke He}
\orcid{0000-0001-9649-1796}
\affiliation{%
  \institution{State Key Laboratory for Novel Software Technology\\ Nanjing University}
  \city{Nanjing}
  \country{China}}
\email{hetieke@gmail.com}

\renewcommand{\shortauthors}{Liu et al.}

\begin{abstract}
Attributed graph clustering aims to discover node groups by jointly exploiting node attributes and graph topology, yet its unsupervised nature makes model selection and adaptation inherently difficult. 
Existing methods typically train and tune a separate model for each input graph, leading to costly and fragile pipelines that often fail to transfer across graphs with different feature spaces, structural patterns, and attribute-structure correlations. 
In this paper, we study a one-for-all alternative: can a single model be trained once and directly applied to diverse attributed graphs without graph-specific training, fine-tuning, or hyperparameter search? 
We propose \texttt{OFAG}, a foundation model for attributed graph clustering. 
Building upon Prior-data Fitted Networks, \texttt{OFAG} learns a reusable clustering inference strategy from synthetic attributed graphs generated under broad priors over latent clusters, node attributes, and graph structures. 
To handle incompatible feature spaces across graphs, \texttt{OFAG} adopts a dimension-agnostic signal-wise graph encoder that treats each feature channel as a graph signal and models its response to shared graph filters. 
The model is trained with a hyperspherical clustering objective, producing clustering-friendly node representations in a single forward pass at inference time. 
On ten datasets, one frozen \texttt{OFAG} model achieves the best mean performance and average rank across NMI, ACC, ARI, and F1, while completing all ten datasets in 12.43 minutes total---over $6\times$ faster than the second-fastest baseline and nearly $28\times$ faster than the second-best on clustering quality.
Our code and pretrained checkpoint are available at \url{https://github.com/Cloudy1225/OFAG}, allowing practitioners to directly apply \texttt{OFAG} to their own attributed graph datasets without additional training or tuning.
\end{abstract}

\begin{CCSXML}
<ccs2012>
   <concept>
       <concept_desc>Computing methodologies~Cluster analysis</concept_desc>
       <concept_significance>500</concept_significance>
   </concept>
 </ccs2012>
\end{CCSXML}

\ccsdesc[500]{Computing methodologies~Cluster analysis}

\maketitle

\section{Introduction}
\label{sec:intro}

Foundation models have transformed machine learning workflows in language, vision, and now supervised graph tasks: pre-train once on broad data, then apply to diverse inputs without task-specific optimization~\citep{GFMSurvey, GFMSurvey1}.
Attributed graph clustering, which partitions nodes by jointly exploiting features and topology without labels~\cite{DGCSurvey, AGCSurvey}, has yet to benefit from this shift.
Today, every new graph still demands the same costly ritual: design or select a model, tune hyperparameters blindly on unlabeled data, train from scratch, and hope that choices calibrated on one graph transfer to the next.

This per-graph workflow is expensive for a principled reason: real attributed graphs vary substantially and unpredictably.
Feature dimensions range from tens to thousands; attributes may be dense, continuous, or bag-of-words; graph structures span homophilous, heterophilous, and weakly-informative connectivity regimes; and cluster sizes and degree distributions may be highly imbalanced.
As a result, assumptions that work well on one graph can fail on another. 
For example, aggressively propagating features along edges can be beneficial when graph structure agrees with clusters, but harmful when edges connect nodes from different groups.
Without any validation labels, there is no signal to detect such failures or to guide model selection, and the cost of blind hyperparameter search across these axes compounds with every new graph.

Graph foundation models (GFMs) represent a natural starting point, yet existing work leaves attributed graph clustering largely unaddressed~\cite{GFMPosition, GFMSurvey, GFMSurvey1}.
\emph{Text-space GFMs}~\cite{OFA, GraphGPT, GOFA} unify graphs through natural language but require textual node attributes and dataset-specific templates.
\emph{Feature-aligned GFMs}~\cite{GCOPE, SAMGPT, AnomalyGFM} extend transfer to non-textual graphs but typically require target-graph fine-tuning.
\emph{Fully inductive GFMs}~\cite{GraphAny, NodePFN, GraphPFN} avoid per-graph training but rely on a labeled context set at inference to anchor predictions, a resource that clustering cannot provide.
Beyond the availability of labels, attribution graph clustering poses three tightly coupled challenges that distinguish it from all supervised transfer settings:
(i) \emph{permutable and variable cluster identities}: cluster labels carry no consistent meaning across graphs and the number of clusters $K$ varies, ruling out a shared output head;
(ii) \emph{no inference-time supervision}: there are no labels at test time, no validation set for model selection, and no few-shot context to adapt to the target graph;
(iii) \emph{unsupervised feature space bridging}: heterogeneous feature spaces are a known challenge for cross-graph transfer, but supervised GFMs address this through labeled fine-tuning or adaptation. Clustering must bridge incompatible feature dimensions and semantics entirely without downstream supervision.
Together, these make zero-shot unsupervised transfer genuinely hard in a way that simply repurposing existing GFMs cannot resolve.

In this work, we ask whether attributed graph clustering can be made \emph{one-for-all}: can a single model be trained once and then directly applied to arbitrary attributed graphs without graph-specific training, fine-tuning, or hyperparameter search?
We propose \texttt{OFAG}, a \underline{O}ne-\underline{f}or-\underline{A}ll foundation model for attributed \underline{G}raph clustering, trained once on synthetic attributed graphs and applied to any new graph in a single forward pass, after which a standard algorithm recovers the clusters.

The key idea is to adapt Prior-data Fitted Networks (PFNs)~\cite{PFNPosition, TabPFNv2} to the unsupervised graph setting.
PFNs amortize Bayesian inference over a prior on tasks, learning a reusable inference strategy from synthetic data rather than fitting parameters to each new instance.
Adapting this paradigm to clustering is non-trivial: unlike supervised PFNs where labeled context nodes ground predictions, \texttt{OFAG} must recover cluster structure from attributes and topology alone, with no context labels and a variable, unlabeled query set.
We address challenges (i) and (ii) by pre-training \texttt{OFAG} on synthetic attributed graphs drawn from the \emph{Coupled Attributed Graph Mixture} (CAGM) prior, which generates clustering tasks with variable $K$ and permuted cluster identities, covering diverse feature types, cluster separabilities, structural regimes, and attribute-topology coupling strengths, compelling the model to learn an adaptive inference strategy without relying on any labeled context.

To address challenge (iii), \texttt{OFAG} introduces the \emph{Signal-wise Filter Response Transformer} (SwiFRT).
Standard GNNs apply feature transformations tied to a fixed feature space, making them incompatible with zero-shot transfer across heterogeneous graphs.
SwiFRT instead treats each feature channel as a graph signal and encodes its \emph{structural response} under a bank of graph diffusion filters~\cite{GSP} (capturing whether a channel smooths uniformly, oscillates, or amplifies with local topology) using shared parameters applied independently to each channel.
This makes the model both dimension-agnostic and invariant to feature-channel ordering, while remaining sensitive to graph structure.
The model is trained with a hyperspherical clustering objective derived from the von Mises--Fisher distribution~\cite{vMF}, shaping the embedding space so that same-cluster nodes concentrate near shared prototypes and prototypes spread uniformly across the unit hypersphere.

We evaluate \texttt{OFAG} on ten benchmark datasets spanning diverse domains, from thousands to millions of nodes and covering both homophilous and heterophilous structures.
A single frozen \texttt{OFAG} model achieves the best mean score and the best average rank across all four standard metrics (NMI, ACC, ARI, and F1), outperforming all dataset-specific baselines.
Crucially, it is simultaneously the fastest method: \texttt{OFAG} completes all ten datasets in 12.43 minutes total, while the second-fastest baseline requires over $6\times$ more time with substantially lower clustering quality, and the second-best baseline on clustering quality requires nearly $28\times$ more computation.
Since dataset-specific pipelines must also be re-tuned on every new graph without labels, accounting for hyperparameter search pushes the practical time savings to orders of magnitude.
These results establish that prior-fitted foundation-model inference is not only conceptually appealing but also a practically superior route to fast and reusable attributed graph clustering.


\section{Related Work}
\label{sec:related}

\subsection{Attributed Graph Clustering}
\label{sec:rw-agc}

Attributed graph clustering (AGC) aims to partition nodes by jointly exploiting node attributes and graph topology. Driven by graph neural networks~\cite{GCN} and self-supervised learning~\cite{GSSLSurvey}, recent AGC methods can be broadly viewed through an encode--cluster--optimize pipeline~\cite{AGCSurvey}. 
For encoding, methods range from non-parametric graph filters that smooth node features over topology~\cite{AGC, SASE} to parametric GNN encoders that learn deep structure-aware representations~\cite{S3GC, BLNN}. 
For cluster assignment, existing approaches use neural prototypes~\cite{DAEGC, DinkNet}, self-expressive subspace coefficients~\cite{SAGSC, MS2CAG}, or differentiable community assignment modules~\cite{DMoN, Neuromap}. 
For optimization, objectives have evolved from two-stage embedding followed by $k$-means to joint training with graph reconstruction~\cite{GAE, DAEGC, DGAC}, contrastive learning~\cite{NS4GC, TIER, DGF}, feature decorrelation~\cite{CCASSG, DCRN, SSGE}, or clustering-specific losses~\cite{DEC, DinkNet, CoCo}.
However, existing methods remain dataset-specific, which is especially limiting without labels for validation, and assumptions tuned on one graph may not transfer across diverse feature types, structural regimes, and attribute-topology correlations. 
OFAG instead learns a reusable clustering inference strategy, enabling the same frozen model to cluster unseen graphs without graph-specific training, fine-tuning, or hyperparameter search.

\subsection{Graph Foundation Models}
\label{sec:rw-gfm}

Graph foundation models (GFMs) aim to generalize graph learning beyond individual datasets~\cite{GFMSurvey, GFMSurvey1}. 
Existing GFMs can be broadly grouped by how they handle cross-dataset heterogeneity. \emph{Text-space GFMs} unify graph datasets through natural language, using LLMs to encode textual node attributes or perform prediction in text space~\cite{GraphGPT, OFA, GOFA, ZeroG, GraphR1}. 
They support task transfer and few-shot learning, but often rely on textual attributes or dataset-specific graph-to-text templates. 
\emph{Feature-aligned GFMs} extend GFMs to non-textual graphs by mapping heterogeneous node features into a shared space, for example through SVD-based alignment or learnable patching~\cite{GCOPE, UniGraph, SAMGPT, PatchNet, AnomalyGFM}. 
These methods improve cross-dataset transfer, but typically still require downstream adaptation or fine-tuning. 
\emph{Fully inductive GFMs} further aim to operate on arbitrary graphs without target-graph training. Representative methods include GraphAny~\cite{GraphAny}, which combines predictors over multi-scale propagated features; TS-GNNs~\cite{TSGNN}, which impose permutation equivariance and label invariance; G2T-FM~\cite{G2T-FM} and TFM4GAD~\cite{TFM4GAD}, which augment node features with graph-based information and then pass them directly to tabular foundation models~\cite{TabPFNv2}; NodePFN~\cite{NodePFN} and GraphPFN~\cite{GraphPFN}, which extend prior-data fitted networks~\cite{PFNPosition} to node property prediction.
Despite their progress, existing GFMs mainly target supervised graph tasks. 
OFAG addresses a different and largely unexplored problem: foundation modeling for attributed graph clustering. 

\subsection{Prior-Data Fitted Networks}
\label{sec:rw-pfn}

Prior-data fitted networks (PFNs) provide a learning paradigm for amortizing Bayesian inference over tasks sampled from a prior~\citep{PFN, PFNPosition}. 
Consider a task-specific parameter $\phi\sim p(\phi)$ and a dataset $\mathcal{D}=\{(x_i,y_i)\}_{i=1}^{n}$ generated from $p(y\mid x,\phi)$. 
Given a context set $\mathcal{D}_{\mathrm{c}}$, Bayesian prediction for a query $x$ is given by the posterior predictive distribution
\begin{equation}
    p(y\mid x,\mathcal{D}_{\mathrm{c}})
    =
    \int p(y\mid x,\phi)\,
    p(\phi\mid \mathcal{D}_{\mathrm{c}})\,
    d\phi .
\end{equation}
Instead of computing this integral at test time, a PFN trains a neural network $q_\theta$ on synthetic datasets sampled from a prior $p(\mathcal{D})$:
\begin{equation}
    \mathcal{L}_{\mathrm{PFN}}(\theta)
    =
    \mathbb{E}_{\mathcal{D}_{\mathrm{c}}\cup\{(x,y)\}\sim p(\mathcal{D})}
    \left[
    -\log q_\theta(y\mid x,\mathcal{D}_{\mathrm{c}})
    \right].
\end{equation}
Minimizing this prior-data negative log-likelihood trains $q_\theta$ to approximate the posterior predictive distribution induced by the prior, enabling inference on new datasets in a single forward pass without gradient updates.
PFNs have shown strong empirical performance across several domains, such as tabular prediction~\cite{TabPFN, TabPFNv2}, time-series forecasting~\cite{ForecastPFN, CauKer}, and node property prediction~\cite{NodePFN, GraphPFN}.
OFAG follows the same high-level principle of prior fitting, but targets a different problem setting. 
It adapts the PFN philosophy from posterior predictive label inference to zero-shot clustering inference, learning from synthetic attributed graph priors how attributes and topology jointly reveal latent clusters.

\begin{figure*}[t]
    \centering
    \includegraphics[width=\textwidth]{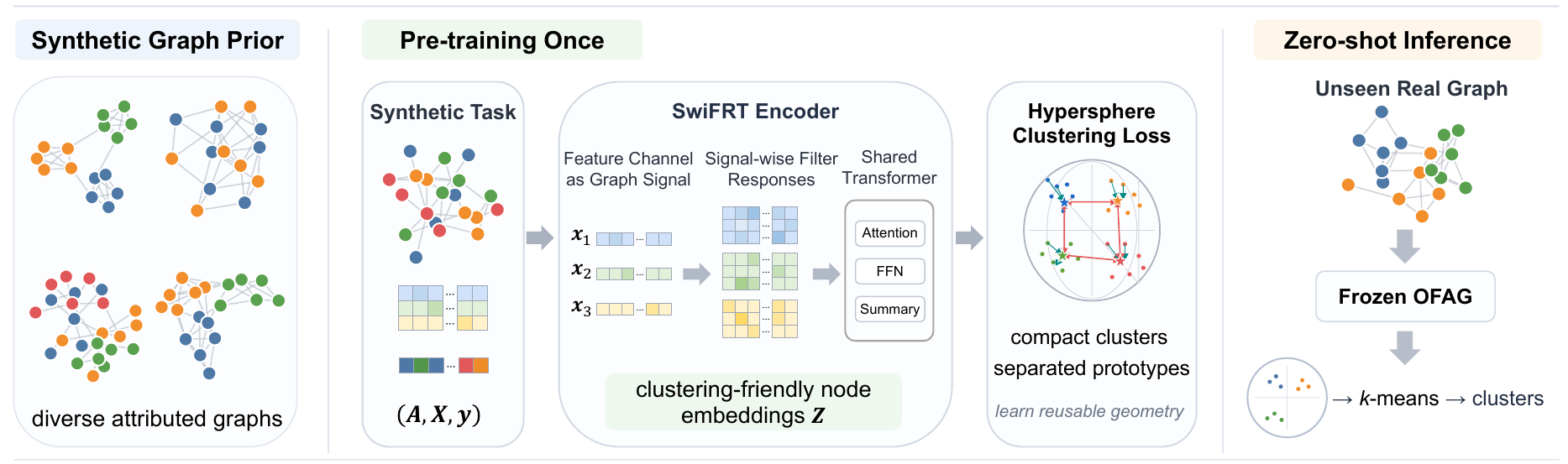}
    \caption{
        Overview of \texttt{OFAG}.
        The model is pre-trained on diverse synthetic attributed graphs using the SwiFRT encoder and a hyperspherical clustering objective.
        After pre-training, the frozen model can be directly applied to unseen real-world graphs for zero-shot clustering without graph-specific fine-tuning.
    }
    \label{fig:framework}
\end{figure*}

\section{Methodology}
\label{sec:method}
As illustrated in Figure~\ref{fig:framework}, \texttt{OFAG} learns a reusable \emph{representation geometry} from synthetic attributed graphs, enabling zero-shot clustering on arbitrary real graphs.
Three challenges drive the design: feature spaces across graphs are heterogeneous in both dimension and semantics; cluster identities carry no consistent meaning across graphs and $K$ varies; and node attributes and graph topology may provide complementary or conflicting evidence for clustering, depending on the graph.
We address these through three coupled components: the \emph{CAGM prior} (\S\ref{sec:synthetic_prior}), which generates synthetic tasks that cover the full diversity of real attributed graphs; \emph{SwiFRT} (\S\ref{sec:swifrt}), a dimension-agnostic encoder that operates on structural response patterns rather than raw feature values; and a \emph{hyperspherical clustering objective} (\S\ref{sec:vmf_loss}), which shapes the embedding space to be directly amenable to label-free clustering algorithms.

\subsection{Problem Setup}
\label{sec:method_problem}

Let $\mathcal{G}=(\mathcal{V},\mathcal{E},X)$ be an attributed graph with $N=|\mathcal{V}|$ nodes, adjacency matrix $A\in\{0,1\}^{N\times N}$, and feature matrix $X\in\mathbb{R}^{N\times F}$.
Attributed graph clustering aims to assign each node to one of $K$ latent clusters by jointly using $A$ and $X$.

Since cluster identities are arbitrary permutations of group indices and $K$ varies across graphs, \texttt{OFAG} does not learn to output fixed class labels.
Instead, it learns a \emph{representation geometry}: a mapping from $(A,X)$ to node embeddings $\{z_i\}$ in which same-cluster nodes are geometrically close and different-cluster nodes are separable, a structure that any label-free algorithm (e.g., $k$-means) can exploit at inference without supervision.

To acquire this geometry from synthetic supervision, let $\mathcal{D}=(A,X,y)$ denote one synthetic task, where $y=(y_1,\ldots,y_N)\in\{1,\ldots,K\}^N$ are latent cluster assignments available only during pre-training.
Given a prior $p(\mathcal{D})$, we optimize
\begin{equation}
    \theta^\ast = \arg\min_{\theta}\,\mathbb{E}_{(A,X,y)\sim p(\mathcal{D})}\bigl[\mathcal{L}\bigl(y,\,f_\theta(A,X)\bigr)\bigr].
    \label{eq:ofag_prior_objective}
\end{equation}
At inference, $y$ is unavailable; the frozen $f_{\theta^\ast}$ maps $(A,X)$ to embeddings clustered by a standard algorithm.
The design of $\mathcal{L}$ and $p(\mathcal{D})$, detailed in the following subsections, ensures that minimizing this expectation teaches a \emph{reusable} geometry rather than a dataset-specific solution.

\subsection{Synthetic Attributed Graph Prior}
\label{sec:synthetic_prior}

One-for-all generalization depends entirely on the breadth of $p(\mathcal{D})$: a prior that is too narrow teaches shortcuts that fail on out-of-distribution graphs.
We introduce the \emph{Coupled Attributed Graph Mixture} (CAGM) prior, a latent-variable generative model in which latent cluster memberships are the shared cause of both node attributes and graph edges, framing \texttt{OFAG} as learning to \emph{invert} a broad family of attributed graph generation processes.

\vpara{Generative structure.}
Each synthetic task samples $N$, $F$, and $K$ from broad marginals; cluster proportions $\pi\sim\mathrm{Dirichlet}(\eta\mathbf{1}_K)$ cover balanced and heavily imbalanced cluster sizes.
Node attributes are generated from cluster-conditioned latents $h_i^X\mid y_i=k\sim t_\nu(\mu_k,\Sigma_k)$, then decoded by a task-specific random transformation $T$ into continuous, categorical, count-valued, or bag-of-words features---covering the full diversity of real-world node attribute types.
Edges are drawn as $A_{ij}\sim\mathrm{Bernoulli}(\sigma(s_{ij}))$, where the log-odds
$s_{ij} = \rho + \alpha_\theta(\theta_i+\theta_j) + \lambda_B B_{y_i,y_j} + \lambda_H\kappa(h_i^A,h_j^A) + \lambda_X\mathrm{sim}(x_i,x_j)$
combines cluster affinity (block matrix $B$), degree propensities ($\theta_i$), latent structural proximity, and feature similarity.
The block affinity matrix $B$ is constructed from a continuously sampled edge homophily level $h\!\sim\!\mathrm{Uniform}(0.1,0.9)$, interpolating smoothly between homophilous and heterophilous structural regimes.
Full sampling details are in Appendix~\ref{app:prior_details}.

\vpara{Design rationale.}
The diversity of structural regimes is deliberate.
If the prior only generated homophilous graphs, \texttt{OFAG} could reduce to a feature-smoothing heuristic; if attributes were always well-separated, it could ignore topology altogether.
By independently randomizing attribute separability, edge--cluster coupling, and structural regime across tasks, CAGM compels the model to learn an adaptive strategy that weighs attribute and structural evidence according to their relative reliability in each graph.

The following theorem justifies why training on CAGM generalizes to real graphs; the proof is in Appendix~\ref{app:thm_synthetic_real}.

\begin{theorem}[Synthetic-to-real generalization]
\label{thm:synthetic_real}
Let $p_{\mathrm{real}}(\mathcal{G})$ be a distribution over attributed graphs satisfying:
\emph{(i)} the conditional feature distribution $p(X\mid y)$, given latent cluster labels $y$, is a finite mixture of continuous distributions with $y$-dependent parameters; and
\emph{(ii)} the edge distribution $p(A\mid y,X)$ follows a pairwise probability model whose log-odds may depend arbitrarily on cluster membership and node features.
Then there exist a graph encoder $\Phi$ and a CAGM prior $p_{\mathrm{syn}}(\mathcal{G})$ such that $\Phi_\# p_{\mathrm{syn}}(\mathcal{G})$ approximates $p_{\mathrm{real}}(\mathcal{G})$ with arbitrarily small total variation distance.
\end{theorem}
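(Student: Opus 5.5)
The plan is to reduce the statement to two approximation facts, one for features and one for edges, and then glue them with the chain rule for total variation. Condition on $(N,F,K)$. These are discrete with broad support under CAGM, so it suffices to match them exactly and treat each fixed $(N,F,K)$ separately. I will write $p_{\mathrm{real}}(\mathcal{G})=\sum_y p(y)\,p(X\mid y)\,p(A\mid y,X)$ and build the synthetic prior in the same factorized form: cluster proportions, then attributes, then edges. The basic tool is
\[
\mathrm{TV}(p_{\mathrm{syn}},p_{\mathrm{real}})\le \mathrm{TV}(p_{\mathrm{syn}}(y),p_{\mathrm{real}}(y))+\mathbb{E}_y\,\mathrm{TV}(p_{\mathrm{syn}}(X\mid y),p_{\mathrm{real}}(X\mid y))+\mathbb{E}_{y,X}\,\mathrm{TV}(p_{\mathrm{syn}}(A\mid y,X),p_{\mathrm{real}}(A\mid y,X)).
\]
This is obtained by telescoping through the intermediate hybrid distributions. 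Pushing through a measurable map $\Phi$ cannot increase TV, so $\Phi$ only needs to absorb the decoding mismatch. Concretely, I will let $\Phi$ act as the identity on $A$ and as a feature-space map on $X$, and I will reuse the task-specific random transformation $T$ to supply that map.

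\textbf{Step 1 (cluster proportions).} Mixing over $\pi\sim\mathrm{Dirichlet}(\eta\mathbf 1_K)$ gives a distribution on $y$ that is exchangeable with full support. If the real label law is not exchangeable, I will absorb the discrepancy into a relabeling, since clustering is permutation invariant. Otherwise I will allow the prior over $\eta$, or a mixture over Dirichlet parameters, to approximate any law on the simplex; mixtures of Dirichlets are dense in the weak topology, and on a finite label space that is the same as TV.

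\textbf{Step 2 (features).} By (i), each cluster-conditional $p(x\mid y=k)$ is a finite mixture of continuous densities. Two standard facts combine here. First, location–scale mixtures of Student-$t$ (or Gaussian) densities are dense in $L^1$ among continuous densities. Second, any such finite mixture is the pushforward of a single $t_\nu(\mu_k,\Sigma_k)$ latent under a suitable measurable map, by the Borel isomorphism / Knothe–Rosenblatt transport from an absolutely continuous source. I will let $T$ realize this transport for continuous features. Discrete decodings (categorical, counts, bag-of-words) are handled by quantization of that map. Node features are conditionally independent given $y$ in both models, so product TV is at most $N$ times the per-node error, and choosing per-node error below $\varepsilon/(3N)$ suffices.

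\textbf{Step 3 (edges), the main obstacle.} By (ii), the real log-odds are $g(y_i,y_j,x_i,x_j)$ with $g$ arbitrary, whereas CAGM offers only
\[
\rho+\alpha_\theta(\theta_i+\theta_j)+\lambda_B B_{y_iy_j}+\lambda_H\kappa(h^A_i,h^A_j)+\lambda_X\mathrm{sim}(x_i,x_j).
\]
My first attempt will be to absorb the dependence on $y$ fully into $B$, which is an arbitrary $K\times K$ table if the homophily construction is allowed general entries. The feature dependence I will push into $\kappa(h_i^A,h_j^A)$ by coupling $h^A_i$ to $h^X_i$, so that $\kappa$ is a positive-definite kernel on latents. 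I will then invoke universality of kernels: on compacts, sums of products $\sum_m a_m(u)a_m(v)$ approximate any continuous symmetric function uniformly. Non-positive-definite parts are handled by a difference of kernels, split between the $\kappa$ and $\mathrm{sim}$ terms. Uniform closeness $\delta$ of the log-odds implies per-edge Bernoulli TV at most $\delta/4$. Together with truncation of the latents to a compact set of probability at least $1-\varepsilon'$, this bounds the $\binom N2$-product TV.

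If the fixed parametric form of $\kappa$ is too rigid for this, I will fall back to reading "CAGM prior" as the closure of the family, with kernels ranging over a universal class, and state that assumption explicitly. Summing the three error terms then gives TV below any prescribed $\varepsilon$, which proves Theorem~\ref{thm:synthetic_real}.
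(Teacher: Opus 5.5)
Your route is the paper's route. You factor the law into labels, features and edges, realize the features through the decoder $T$ and the edges through the CAGM log-odds, and glue the pieces with the TV chain-rule bound. You are more careful than the paper (the label factor, the product bounds, the $\delta/4$ Bernoulli estimate), but Step~3 fails at the generality of (ii). The paper's proof makes the same unjustified leap (``by setting these coefficients appropriately'').

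\textbf{Step 3.} Putting all $y$-dependence into $B_{y_iy_j}$ and all feature dependence into $\kappa$ or $\mathrm{sim}$ reaches only log-odds of the additive form $b(y_i,y_j)+c(x_i,x_j)$. Condition (ii) allows interactions such as $g=\mathbf{1}[y_i=y_j]\,\langle x_i,x_j\rangle$, where attributes drive edges only within clusters. Kernel universality in the features alone cannot produce such a term. A second problem: your $\kappa$ acts on latents tied to $h^X_i$, not on the observed $x_i$. With decoder noise or a non-injective $T$ (e.g.\ your quantization), $p_{\mathrm{syn}}(A\mid y,X)$ is a mixture over the posterior of the latents. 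It is not a product of Bernoullis whose log-odds are functions of $(y,X)$, so your per-edge bound does not apply as written. Your fallback (reading CAGM as its closure under universal kernels) changes the hypothesis rather than proving the statement. Even under that reading, the structural latent would have to be (an injective function of) $(y_i,x_i)$ itself, with $\kappa$ universal on $\{1,\dots,K\}\times\mathbb{R}^F$.

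\textbf{Steps 1--2.} These have a gap of the same kind. In your model the rows of $X$ are i.i.d.\ given the task-level draws, so the law of $X$ is a mixture $\int q^{\otimes N}\,d\Pi(q)$, and your row-wise $\Phi$ keeps it in that class. Nothing in (i) forces the real law to be of this type. Take $N=K=2$, exactly one node per cluster in random order, and $p_1,p_2$ supported on disjoint sets $S_1,S_2$. The real $X$ then lives on $D=S_1\times S_2\cup S_2\times S_1$, while every such mixture gives $D$ mass $\int 2q(S_1)q(S_2)\,d\Pi(q)\le\frac12$, so the TV error is at least $\frac12$. This law is exchangeable, so your relabeling clause does not apply. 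Density of Dirichlet mixtures only reaches mixture-of-i.i.d.\ label laws, not arbitrary laws on $\{1,\dots,K\}^N$.

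Relatedly, $T$ is shared by all clusters. Writing $\nu_k$ for the law of the decoder input in cluster $k$, you need one map with $T_{\#}\nu_k\approx p_k$ for every $k$ at once. Since $\mathrm{TV}(T_{\#}\nu_j,T_{\#}\nu_k)\le\mathrm{TV}(\nu_j,\nu_k)$, this forces the latent clusters to be at least as separated as the real ones, which you never arrange. The paper glosses over this by letting its map be conditioned on $k$, which the CAGM decoder is not.

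\textbf{A way out.} For the theorem as literally stated, do not restrict $\Phi$. A CAGM law with a continuous, noisy decoder is atomless on the standard Borel space of finite attributed graphs. The isomorphism theorem for standard probability spaces then gives a measurable $\Phi$ acting jointly on $(A,X)$ with $\Phi_{\#}p_{\mathrm{syn}}=p_{\mathrm{real}}$ exactly. That closes every gap at once, but it uses neither (i) nor (ii). So any substantive version of the theorem must constrain $\Phi$, and under such a constraint the obstructions above are real.
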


Conditions (i) and (ii) are mild: they assert that both attributes and edges are generated through cluster-dependent mechanisms, without restricting to any particular distribution family or connectivity pattern.

\subsection{Signal-wise Filter Response Transformer}
\label{sec:swifrt}

A dimension-specific feature transformation $XW$ with $W\in\mathbb{R}^{F\times d}$ is incompatible with zero-shot transfer: it binds the model to a fixed feature dimension $F$ and treats feature channels as semantically aligned across graphs, an assumption that fails when graphs come from different domains.

\texttt{OFAG} resolves this with the \emph{Signal-wise Filter Response Transformer} (SwiFRT), which changes what is encoded.
Inspired by graph signal processing~\citep{GSP}, SwiFRT treats each feature channel $X_{:,f}$ as a scalar signal on the graph and learns from its \emph{structural response} rather than its raw values.
The key observation is that the structural response is transferable across feature spaces: a signal that smooths uniformly under diffusion behaves differently from one that oscillates or amplifies with the local topology, regardless of the feature's semantic content.
SwiFRT therefore learns a single encoder operating in this response space, shared across all feature channels and all graphs.

\vpara{Filter response profiles.}
Let $\hat{A}=\tilde{D}^{-1/2}(A+I)\tilde{D}^{-1/2}$ be the symmetrically normalized adjacency matrix, where $\tilde{D}$ is the degree matrix of $A+I$~\cite{GCN,SSGC}.
For each node $i$ and feature channel $f$, the \emph{filter response profile} collects the signal value at each diffusion order $\ell=0,\ldots,L$:
\begin{equation}
    r_{i,f} = \bigl[(\hat{A}^0 X)_{i,f},\;(\hat{A}^1 X)_{i,f},\;\ldots,\;(\hat{A}^L X)_{i,f}\bigr]
    \in \mathbb{R}^{L+1}.
    \label{eq:swifrt_profile}
\end{equation}
This profile captures whether the signal at $(i,f)$ is locally stable, smoothed by neighbors, or progressively amplified or attenuated across multi-hop neighborhoods.

\vpara{Signal-wise transformer encoding.}
A shared encoder $g_\theta$, applied identically to every $(i,f)$ pair, maps each profile $r_{i,f}$ to a re-encoded scalar $Z_{i,f}=g_\theta(r_{i,f})$.
We implement $g_\theta$ as a Transformer operating along the filter-order dimension.
Each scalar response $r_{i,f,\ell}$ is first embedded as a $d$-dimensional token:
\begin{equation}
    e_{i,f,\ell} = \phi_{\mathrm{val}}(r_{i,f,\ell}) + p_\ell, \qquad \ell=0,\ldots,L,
    \label{eq:swifrt_token_embedding}
\end{equation}
where $\phi_{\mathrm{val}}:\mathbb{R}\to\mathbb{R}^d$ embeds the response value and $p_\ell\in\mathbb{R}^d$ is a learnable filter-order positional embedding encoding which diffusion step was applied.
Prepending a learnable summary token $c_{\mathrm{cls}}$, we run the sequence $[c_{\mathrm{cls}},e_{i,f,0},\ldots,e_{i,f,L}]$ through the Transformer~\cite{Transformer} and read off the summary-token output:
\begin{equation}
    h_{i,f} = \mathrm{Transformer}([c_{\mathrm{cls}},e_{i,f,0},\ldots,e_{i,f,L}])_{[0]} \in \mathbb{R}^d,
    Z_{i,f} = \phi_{\mathrm{out}}(h_{i,f}),
    \label{eq:swifrt_output}
\end{equation}
where $\phi_{\mathrm{out}}:\mathbb{R}^d\to\mathbb{R}$ is a shared scalar decoder.
The Transformer performs self-attention only among filter-response tokens of the same node-feature pair $(i,f)$. 
Thus, it learns interactions among different filter orders, such as whether a feature value is stable under diffusion, rapidly smoothed by neighbors, amplified at higher orders, or inconsistent with its surrounding graph structure.
These patterns characterize the signal's structural role independently of its semantic content.

The node representation is the row-normalized re-encoded feature vector:
\begin{equation}
    z_i = Z_{i,:}\,/\,\|Z_{i,:}\|_2 \in S^{F-1}.
    \label{eq:swifrt_norm}
\end{equation}
This construction gives SwiFRT two desirable properties for one-for-all clustering.
First, it is structure-aware: although the Transformer does not directly attend across nodes, each response profile already contains information aggregated by graph filters, so the scalar re-encoding reflects local and higher-order graph structure.
Second, it is dimension-agnostic: because $g_\theta$ is applied independently to each feature channel, the same model can handle arbitrary input feature dimensions.

The dimension-agnosticism and invariance to feature-channel ordering are formalized below; proofs are in Appendix~\ref{app:prop_swifrt}.

\begin{proposition}[Dimension-agnosticism and feature-permutation equivariance of SwiFRT]
\label{prop:swifrt}
Let $\sigma$ be any permutation of feature indices $\{1,\ldots,F\}$, and let $X_\sigma$ denote $X$ with the $f$-th column equal to the $\sigma(f)$-th column of $X$.
SwiFRT satisfies:
\emph{(i)} the number of trainable parameters is independent of the input feature dimension $F$; and
\emph{(ii)} $\operatorname{SwiFRT}_\theta(X_\sigma,A)=\operatorname{SwiFRT}_\theta(X,A)_\sigma$,
where the right-hand side denotes reordering the columns of $\operatorname{SwiFRT}_\theta(X,A)$ by $\sigma$.
\end{proposition}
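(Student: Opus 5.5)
The plan is to prove both parts directly from the construction of SwiFRT in Eqs.~\eqref{eq:swifrt_profile}--\eqref{eq:swifrt_output}. The key structural fact is that every learned component acts on a single node--feature pair $(i,f)$ and never on the feature index as a coordinate.

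For \emph{(i)}, I will list the trainable parameters and check that none of their shapes involves $F$. These are:
\begin{itemize}
\item the value embedding $\phi_{\mathrm{val}}:\mathbb{R}\to\mathbb{R}^d$, whose parameter count depends only on $d$;
\item the positional embeddings $p_0,\ldots,p_L$, giving $(L+1)d$ parameters;
\item the summary token $c_{\mathrm{cls}}\in\mathbb{R}^d$;
\item the Transformer weights, which depend on $d$, the number of heads, layers and hidden width;
\item the decoder $\phi_{\mathrm{out}}:\mathbb{R}^d\to\mathbb{R}$.
\end{itemize}
The propagation matrix $\hat A$ is computed from $A$ and is not trainable. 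The final normalization in Eq.~\eqref{eq:swifrt_norm} is parameter-free. Hence the total parameter count is a function of $(d,L)$ and the architecture hyperparameters only, which proves the claim.

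For \emph{(ii)}, I will write the column permutation as right multiplication $X_\sigma = XP_\sigma$, where $P_\sigma$ is the permutation matrix with $(XP_\sigma)_{:,f}=X_{:,\sigma(f)}$. The argument then has three steps.
\begin{enumerate}
\item \emph{Profiles.} Because $\hat A^\ell$ acts on the left, $\hat A^\ell X_\sigma = (\hat A^\ell X)P_\sigma$. Therefore $r_{i,f}(X_\sigma)=r_{i,\sigma(f)}(X)$ for all $i$ and $f$: the profile tensor is permuted along its feature axis.
\item \emph{Re-encoding.} $Z_{i,f}=g_\theta(r_{i,f})$ applies one fixed function to each profile. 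The self-attention only mixes the $L+2$ tokens of a single $(i,f)$ pair, so no information crosses feature indices. It follows that $Z(X_\sigma)_{i,f}=g_\theta(r_{i,\sigma(f)}(X))=Z(X)_{i,\sigma(f)}$, that is, $Z(X_\sigma)=Z(X)P_\sigma$.
\item \emph{Normalization.} Row $\ell_2$-normalization commutes with column permutation because $\|Z_{i,:}P_\sigma\|_2=\|Z_{i,:}\|_2$. Thus $z_i(X_\sigma)=z_i(X)P_\sigma$, which is the stated identity.
\end{enumerate}

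No step is genuinely hard. The only point needing care is stating precisely that the Transformer's attention, and any layer normalization inside it, operate only within the token sequence of a fixed $(i,f)$, so that $g_\theta$ is truly a per-pair map. I will note explicitly that this holds by the construction described after Eq.~\eqref{eq:swifrt_output}. Once that is fixed, equivariance is a composition of three maps that each commute with $P_\sigma$. I will also remark that (ii) implies feature-order invariance of downstream quantities that depend only on the node Gram matrix $z_i^\top z_j$, such as the $k$-means partition.
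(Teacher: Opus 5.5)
Your proposal is correct and follows essentially the same route as the paper's proof. For (i) you enumerate the $F$-independent modules ($\phi_{\mathrm{val}}$, $\{p_\ell\}$, the Transformer, $\phi_{\mathrm{out}}$); for (ii) you show $r^{(\sigma)}_{i,f}=r_{i,\sigma(f)}$ because $\hat{A}^\ell$ acts only on the node dimension, then use that $g_\theta$ is applied independently to each $(i,f)$ pair. Your explicit inclusion of $c_{\mathrm{cls}}$ in the parameter count and your check that row $\ell_2$-normalization commutes with $P_\sigma$ are small refinements the paper leaves implicit.
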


\subsection{Hyperspherical Clustering Objective}
\label{sec:vmf_loss}

We train \texttt{OFAG} with an objective derived from the von Mises--Fisher (vMF) distribution~\cite{vMF}, which is the canonical probability distribution for directional data on the unit hypersphere.
Prior work has observed that clustering often becomes more stable and effective when performed on hyperspherical representations, where angular similarity rather than vector magnitude determines the grouping structure~\citep{SwAV, DGCBench}.
This is suitable for our setting because different graphs may have incompatible feature scales and dimensions.

\vpara{Prototypes.}
Given normalized embeddings $\{z_i\}$ from SwiFRT for a synthetic graph with labels $y_i\in\{1,\ldots,K\}$, we compute an \emph{episode-specific} prototype for each cluster $k$ on the fly:
\begin{equation}
    \hat{\mu}_k = \frac{\sum_{i:\,y_i=k} z_i}{\bigl\|\sum_{i:\,y_i=k} z_i\bigr\|_2 + \varepsilon} \in S^{F-1}.
    \label{eq:prototype}
\end{equation}
Computing prototypes on-the-fly from the current embeddings and known synthetic labels avoids global prototype parameters, accommodates variable $K$ across tasks, and introduces no fixed cluster-identity structure.

\vpara{Compactness loss.}
Under a vMF mixture with shared concentration $\kappa=1/\tau$, the posterior assignment probability is
\begin{equation}
    q_\theta(y_i=k\mid z_i)
    = \frac{\exp(\hat{\mu}_k^\top z_i/\tau)}{\sum_{k'=1}^K \exp(\hat{\mu}_{k'}^\top z_i/\tau)},
    \label{eq:vmf_posterior}
\end{equation}
and maximizing the likelihood of the synthetic assignments gives the compactness loss
\begin{equation}
    \mathcal{L}_{\mathrm{comp}}
    = -\frac{1}{N}\sum_{i=1}^N \log\frac{\exp(\hat{\mu}_{y_i}^\top z_i/\tau)}{\sum_{k=1}^K \exp(\hat{\mu}_k^\top z_i/\tau)}.
    \label{eq:compactness_loss}
\end{equation}
This term encourages each representation to align with its cluster prototype while moving away from other prototypes.

\begin{proposition}[Compactness loss as vMF MLE]
\label{prop:vmf_mle}
Minimizing $\mathcal{L}_{\mathrm{comp}}$ with respect to $\theta$ is equivalent to maximum likelihood estimation of the prototype directions $\{\hat{\mu}_k\}_{k=1}^K$ of a $K$-component vMF mixture with concentration $\kappa=1/\tau$.
\end{proposition}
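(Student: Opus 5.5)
The argument is a direct computation: write down the $K$-component vMF mixture likelihood for the embeddings $\{z_i\}$ together with the synthetic labels $\{y_i\}$, and show that its negative log-posterior over assignments equals $N\,\mathcal{L}_{\mathrm{comp}}$ up to terms independent of $\theta$.

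\textbf{Step 1: the vMF component density.} On $S^{F-1}$ it is
\[
p(z\mid\mu,\kappa)=C_F(\kappa)\exp(\kappa\,\mu^\top z),
\qquad
C_F(\kappa)=\frac{\kappa^{F/2-1}}{(2\pi)^{F/2}I_{F/2-1}(\kappa)} .
\]
The key observation is that $C_F(\kappa)$ depends only on $F$ and the shared concentration $\kappa=1/\tau$. It does not depend on the mean direction, so with a common $\kappa$ across components it is identical for all $k$.

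\textbf{Step 2: Bayes' rule.} Take the mixture with equal (or any fixed, $\theta$-independent) weights. Bayes' rule gives
\[
p(y_i=k\mid z_i)=\frac{C_F(\kappa)\exp(\kappa\hat\mu_k^\top z_i)}{\sum_{k'}C_F(\kappa)\exp(\kappa\hat\mu_{k'}^\top z_i)}.
\]
The normalizers cancel, which recovers exactly \eqref{eq:vmf_posterior}. If weights $\pi_k$ are included, they contribute additive $\log\pi_k$ terms, which I will either fix to uniform or note as constants in $\theta$.

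\textbf{Step 3: sum over nodes.} The conditional log-likelihood of the synthetic assignments is
\[
\sum_i\log p(y_i\mid z_i)=-N\,\mathcal{L}_{\mathrm{comp}}.
\]
Hence minimizing $\mathcal{L}_{\mathrm{comp}}$ is the same as maximizing the (discriminative) likelihood of the mixture. Equivalently, the joint log-likelihood satisfies
\[
\sum_i\log p(z_i,y_i)=N\log C_F(\kappa)+\kappa\sum_i\hat\mu_{y_i}^\top z_i ,
\]
and differs from the conditional one by the log-evidence term.

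\textbf{Step 4: tie to MLE of the prototype directions.} I will note that the prototype \eqref{eq:prototype} (as $\varepsilon\to0$) is precisely the closed-form vMF maximum-likelihood estimate of the mean direction of cluster $k$. Indeed, maximizing $\sum_{i:y_i=k}\mu^\top z_i$ subject to $\|\mu\|=1$ by a Lagrange multiplier yields the normalized resultant vector. So the prototypes plugged into the loss are the MLE directions given the current embeddings, and optimizing $\theta$ through them realizes joint MLE of the mixture's directions within the profile likelihood.

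\textbf{Main obstacle.} The delicate point is the word ``equivalent.'' $\mathcal{L}_{\mathrm{comp}}$ is a conditional (classification) likelihood, whereas the full mixture MLE also involves the marginal $\sum_i\log\sum_k p(z_i\mid\hat\mu_k)$. I would handle this by stating the equivalence precisely: equal $\theta$-independent mixture weights, shared $\kappa$, and prototypes set to their per-cluster MLEs. Under these assumptions the loss equals the negative conditional log-likelihood of the fitted vMF mixture, and the cancellation of $C_F(\kappa)$ is what makes this exact. I will also remark that $\varepsilon>0$ introduces only a vanishing perturbation.
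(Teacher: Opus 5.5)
Your Steps 1--3 are exactly the paper's proof. A shared $\kappa$ makes the Bessel normalizer cancel, uniform weights give Eq.~\eqref{eq:vmf_posterior}, and summing $-\log p(y_i\mid z_i)$ over nodes gives $N\mathcal{L}_{\mathrm{comp}}$. Your conditional-versus-joint caveat, and the observation that the normalized resultant is the vMF mean-direction MLE, are correct sharpenings of the paper, which mislabels this conditional likelihood as ``complete-data''.

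Two corrections are needed.

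\textbf{The ``profile likelihood'' sentence in Step~4 is not accurate.} The $\hat\mu_k$ maximize the \emph{joint} likelihood, not the conditional one. Profiling the joint likelihood over the directions gives $\kappa\sum_k\|\sum_{i:y_i=k}z_i\|_2$ plus constants, which is not $-N\mathcal{L}_{\mathrm{comp}}$. Replace that sentence with your final precise statement: the loss is the negative conditional log-likelihood of the vMF mixture fitted by per-cluster MLE.

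\textbf{Drop the parenthetical ``or any fixed, $\theta$-independent weights''.} Non-uniform $\pi_k$ do not contribute only the additive constant $\log\pi_{y_i}$ from the numerator. They also remain inside the $\theta$-dependent log-sum-exp denominator. So only uniform weights reproduce $\mathcal{L}_{\mathrm{comp}}$ exactly.
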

\begin{proof}[Proof sketch]
The vMF normalization constant $Z_F(\kappa)$ does not depend on the mean direction $\mu_k$, so it cancels identically in Eq.~\eqref{eq:vmf_posterior}, yielding a posterior equivalent to the complete-data log-likelihood of a vMF mixture.
The full derivation is in Appendix~\ref{app:prop_vmf}.
\end{proof}

\vpara{Dispersion loss.}
The compactness loss alone does not prevent prototypes from collapsing into a small region of the hypersphere.
We therefore add a prototype dispersion term:
\begin{equation}
    \mathcal{L}_{\mathrm{disp}}
    = \frac{1}{K}\sum_{k=1}^K \log\frac{1}{K-1}\sum_{j\neq k} \exp\bigl(\hat{\mu}_k^\top \hat{\mu}_j/\tau\bigr),
    \label{eq:dispersion_loss}
\end{equation}
whose minimization reduces pairwise cosine similarities between different prototypes and encourages the clusters to spread across the hypersphere.

\begin{remark}[Equiangular tight frame optimum]
\label{rem:etf}
$\mathcal{L}_{\mathrm{disp}}$ is a log-sum-exp of pairwise cosine similarities between prototypes.
When $F\geq K-1$, its minimizer approaches an \emph{equiangular tight frame} (ETF): the unique configuration in which all pairwise cosine similarities equal $-1/(K-1)$ and prototypes are as uniformly spread across $S^{F-1}$ as possible.
This geometry maximizes inter-cluster angular separation and is most favorable for distinguishing unseen nodes between clusters at inference.
\end{remark}

\begin{remark}[Complementarity of $\mathcal{L}_{\mathrm{comp}}$ and $\mathcal{L}_{\mathrm{disp}}$]
\label{rem:complementarity}
The two terms regularize complementary aspects of the mixture geometry: $\mathcal{L}_{\mathrm{comp}}$ concentrates each node near its prototype while repelling it from all other prototypes; $\mathcal{L}_{\mathrm{disp}}$ spreads the prototypes themselves across the hypersphere.
Together they enforce that nodes concentrate \emph{within} clusters and that clusters are separated \emph{from each other}.
\end{remark}

\vpara{Training objective.}
The overall prior-data objective is
\begin{equation}
    \mathcal{L}(\theta)
    = \mathbb{E}_{(A,X,y)\sim p(\mathcal{D})}
    \bigl[\mathcal{L}_{\mathrm{comp}}(\theta;A,X,y) + \lambda\,\mathcal{L}_{\mathrm{disp}}(\theta;A,X,y)\bigr],
    \label{eq:expected_full_loss}
\end{equation}
where $\lambda>0$ balances the two terms.
In practice, this expectation is approximated by Monte Carlo: at each training step we sample a batch of synthetic graphs, encode them with SwiFRT, compute on-the-fly prototypes from the synthetic labels, and back-propagate through the combined loss.

\begin{table*}[t]
\centering
\caption{Clustering performance in NMI and ACC on homophilous and heterophilous attributed graphs. The \bestcap{}, \secondcap{}, and \thirdcap{} results in each column are highlighted. For Avg. Rank, lower is better; for all other columns, higher is better.}
\label{tab:nmi_acc}
\scriptsize
\setlength{\tabcolsep}{2.2pt}
\resizebox{\textwidth}{!}{%
\begin{tabular}{llcccccccccccc}
\toprule
\multirow{2}{*}{Model} & \multirow{2}{*}{Metric} & \multicolumn{5}{c}{Homophilous Graphs} & \multicolumn{5}{c}{Heterophilous Graphs} & \multirow{2}{*}{Mean} & \multirow{2}{*}{Avg. Rank} \\
\cmidrule(lr){3-7} \cmidrule(lr){8-12}
& & Cora & Photo & ArXiv & Reddit & Products & HM & Flickr & MAG & Pokec & WebTopic & & \\
\midrule
\multirow{2}{*}{KMeans} & NMI & \res{13.89}{4.46} & \res{31.70}{0.49} & \res{22.59}{0.03} & \res{11.08}{0.14} & \res{29.24}{0.11} & \res{10.17}{0.11} & \res{1.21}{0.07} & \res{28.32}{0.06} & \res{1.41}{0.00} & \res{2.45}{0.05} & 15.21 & 16.30 \\
    & ACC & \res{33.09}{4.00} & \res{42.54}{2.54} & \res{16.95}{0.38} & \res{8.59}{0.06} & \res{21.23}{0.74} & \res{14.12}{0.25} & \res{26.02}{1.73} & \res{7.09}{0.03} & \res{1.91}{0.01} & \res{9.63}{0.34} & 18.12 & 15.90 \\
\addlinespace[1pt]
\multirow{2}{*}{Node2Vec} & NMI & \res{44.95}{1.49} & \res{66.02}{1.05} & \res{38.93}{0.15} & \res{79.23}{0.38} & \res{50.97}{0.20} & \res{7.60}{0.13} & \res{5.59}{0.00} & \res{37.73}{0.03} & \second{\res{31.92}{0.02}} & \res{5.80}{0.09} & 36.87 & 9.80 \\
    & ACC & \res{60.52}{2.81} & \res{73.08}{1.06} & \res{25.97}{0.57} & \res{70.67}{1.68} & \res{36.59}{0.74} & \res{13.28}{0.41} & \res{24.44}{0.01} & \res{10.86}{0.09} & \second{\res{16.62}{0.19}} & \res{10.71}{0.12} & 34.27 & 10.60 \\
\addlinespace[1pt]
\midrule
\multirow{2}{*}{SSGC} & NMI & \res{51.85}{1.05} & \res{70.75}{1.39} & \res{46.12}{0.16} & \res{51.61}{0.36} & \res{52.04}{0.17} & \res{12.15}{0.11} & \res{4.48}{0.19} & \third{\res{41.52}{0.04}} & \res{4.33}{0.01} & \res{3.92}{0.11} & 33.88 & 7.60 \\
    & ACC & \res{65.38}{1.47} & \res{76.34}{4.04} & \second{\res{38.58}{0.56}} & \res{40.00}{1.79} & \res{37.02}{0.70} & \res{15.57}{0.07} & \res{25.95}{0.42} & \res{11.89}{0.10} & \res{2.93}{0.04} & \res{11.34}{0.18} & 32.50 & 9.10 \\
\addlinespace[1pt]
\multirow{2}{*}{SAGSC} & NMI & \res{44.35}{0.06} & \res{58.40}{0.02} & \res{43.27}{0.19} & \third{\res{80.02}{0.38}} & \res{51.78}{0.24} & \third{\res{12.47}{0.13}} & \res{6.86}{0.00} & \res{40.15}{0.03} & \best{\res{38.33}{0.03}} & \third{\res{9.11}{0.26}} & \second{38.47} & 6.90 \\
    & ACC & \res{63.45}{0.03} & \res{67.91}{0.01} & \res{30.87}{0.83} & \third{\res{72.79}{1.57}} & \res{38.11}{0.69} & \third{\res{17.83}{0.11}} & \res{26.72}{0.01} & \res{11.90}{0.16} & \best{\res{19.86}{0.23}} & \res{13.66}{0.53} & \third{36.31} & 6.95 \\
\addlinespace[1pt]
\multirow{2}{*}{MS2CAG} & NMI & \res{53.64}{0.64} & \second{\res{72.49}{0.79}} & \res{44.20}{0.18} & \res{72.85}{0.53} & \res{50.71}{0.31} & \res{9.75}{0.36} & \res{7.32}{0.02} & \res{40.39}{0.07} & \res{2.77}{0.05} & \res{7.77}{0.08} & 36.19 & 7.70 \\
    & ACC & \res{70.29}{0.66} & \second{\res{79.13}{0.05}} & \res{35.85}{0.21} & \res{60.40}{2.40} & \res{35.41}{0.62} & \res{15.06}{0.28} & \res{28.29}{0.06} & \res{11.70}{0.10} & \res{2.27}{0.09} & \res{12.38}{0.12} & 35.08 & 8.10 \\
\addlinespace[1pt]
\midrule
\multirow{2}{*}{DAEGC} & NMI & \res{46.88}{1.71} & \res{63.60}{0.02} & \res{39.81}{0.29} & \res{40.84}{0.76} & \res{9.18}{3.05} & \res{11.46}{0.09} & \res{4.23}{0.03} & \res{29.00}{2.17} & \res{3.85}{0.21} & \res{4.79}{0.29} & 25.36 & 13.40 \\
    & ACC & \res{64.06}{3.18} & \res{77.78}{0.01} & \res{28.68}{0.64} & \res{27.57}{0.33} & \res{14.83}{2.09} & \res{17.37}{0.31} & \res{28.51}{0.35} & \res{10.83}{0.51} & \res{3.99}{0.03} & \third{\res{16.42}{0.61}} & 29.00 & 9.20 \\
\addlinespace[1pt]
\multirow{2}{*}{DinkNet} & NMI & \res{55.56}{0.18} & \res{64.74}{0.06} & \res{37.24}{0.57} & \res{54.95}{0.24} & \res{38.43}{0.18} & \res{11.26}{0.06} & \res{6.67}{0.05} & \res{37.12}{0.05} & \res{3.91}{0.01} & \res{5.47}{0.04} & 31.54 & 11.00 \\
    & ACC & \res{72.49}{0.50} & \res{74.03}{0.25} & \res{25.47}{0.88} & \res{38.51}{0.82} & \res{26.78}{0.37} & \res{15.79}{0.28} & \second{\res{29.94}{0.38}} & \res{10.14}{0.08} & \res{2.73}{0.04} & \res{11.52}{0.55} & 30.74 & 10.50 \\
\addlinespace[1pt]
\multirow{2}{*}{MinCut} & NMI & \res{40.80}{1.83} & \res{62.34}{2.39} & \res{39.00}{1.02} & \res{48.85}{3.58} & \res{35.80}{0.71} & \res{7.42}{0.37} & \res{7.59}{0.19} & \res{38.21}{0.12} & \res{5.78}{0.18} & \res{6.07}{0.52} & 29.19 & 12.10 \\
    & ACC & \res{52.95}{6.65} & \res{69.46}{2.22} & \res{27.88}{1.20} & \res{34.09}{2.08} & \res{24.88}{0.74} & \res{13.87}{0.43} & \res{27.64}{1.87} & \res{11.60}{0.32} & \res{3.17}{0.09} & \res{11.67}{0.95} & 27.72 & 12.60 \\
\addlinespace[1pt]
\multirow{2}{*}{DMoN} & NMI & \res{43.84}{2.29} & \res{62.74}{2.67} & \res{38.77}{0.28} & \res{50.66}{0.38} & \res{34.80}{0.85} & \res{7.59}{0.92} & \second{\res{7.89}{0.09}} & \res{38.30}{0.12} & \res{8.42}{0.08} & \res{7.49}{0.37} & 30.05 & 10.95 \\
    & ACC & \res{52.99}{4.70} & \res{72.68}{1.87} & \res{25.89}{1.12} & \res{39.29}{1.29} & \res{25.40}{1.37} & \res{12.32}{0.58} & \res{28.64}{0.69} & \res{10.81}{0.08} & \res{4.40}{0.07} & \res{12.70}{0.73} & 28.51 & 11.30 \\
\addlinespace[1pt]
\multirow{2}{*}{Neuromap} & NMI & \res{46.98}{2.61} & \res{61.84}{2.34} & \res{40.70}{0.80} & \res{47.81}{0.28} & \res{34.80}{1.46} & \res{7.44}{0.17} & \res{7.77}{0.57} & \res{39.14}{0.20} & \res{8.53}{0.25} & \res{7.19}{0.51} & 30.22 & 10.85 \\
    & ACC & \res{56.96}{3.98} & \res{71.89}{4.19} & \res{33.31}{1.07} & \res{32.64}{0.20} & \res{31.87}{1.93} & \res{12.73}{0.25} & \res{28.86}{2.39} & \best{\res{15.95}{0.32}} & \third{\res{5.12}{0.21}} & \second{\res{16.82}{1.69}} & 30.61 & 8.40 \\
\addlinespace[1pt]
\midrule
\multirow{2}{*}{GAE} & NMI & \res{50.09}{0.07} & \res{61.36}{0.14} & \res{40.86}{0.15} & \res{45.90}{0.27} & \res{42.55}{0.09} & \second{\res{13.58}{0.12}} & \res{4.07}{0.05} & \res{39.30}{0.03} & \res{3.79}{0.04} & \res{3.89}{0.14} & 30.54 & 11.70 \\
    & ACC & \res{67.53}{0.13} & \res{71.54}{0.08} & \res{24.39}{0.31} & \res{34.56}{0.81} & \res{27.70}{0.43} & \res{16.58}{0.32} & \best{\res{38.42}{1.07}} & \res{11.21}{0.07} & \res{2.49}{0.02} & \res{11.86}{0.06} & 30.63 & 10.40 \\
\addlinespace[1pt]
\multirow{2}{*}{DGI} & NMI & \res{56.22}{0.87} & \res{67.77}{0.54} & \res{42.29}{0.08} & \res{72.93}{0.14} & \res{41.28}{0.13} & \res{11.67}{0.06} & \res{6.88}{0.01} & \res{39.39}{0.03} & \res{4.20}{0.01} & \res{5.91}{0.02} & 34.85 & 8.00 \\
    & ACC & \res{72.22}{2.92} & \res{77.57}{0.28} & \res{30.49}{0.64} & \res{62.52}{0.34} & \res{28.99}{0.52} & \res{15.78}{0.32} & \res{24.92}{0.17} & \res{10.60}{0.11} & \res{2.23}{0.02} & \res{11.69}{0.10} & 33.70 & 10.15 \\
\addlinespace[1pt]
\multirow{2}{*}{CCASSG} & NMI & \third{\res{58.74}{0.87}} & \res{64.54}{3.23} & \res{44.69}{0.04} & \res{49.63}{0.07} & \res{50.89}{0.31} & \res{11.93}{0.15} & \res{4.67}{0.01} & \res{40.40}{0.02} & \res{1.35}{0.01} & \res{3.79}{0.18} & 33.06 & 9.40 \\
    & ACC & \res{73.35}{1.92} & \res{70.90}{2.27} & \res{30.68}{0.27} & \res{37.91}{0.49} & \res{37.99}{0.42} & \res{15.78}{0.14} & \res{23.03}{0.35} & \res{11.90}{0.05} & \res{1.95}{0.01} & \res{11.65}{0.91} & 31.51 & 10.60 \\
\addlinespace[1pt]
\multirow{2}{*}{S3GC} & NMI & \res{55.45}{1.12} & \res{68.27}{2.22} & \second{\res{47.11}{0.12}} & \second{\res{83.45}{0.39}} & \second{\res{53.43}{0.16}} & \res{11.57}{0.08} & \third{\res{7.84}{0.00}} & \res{39.80}{0.03} & \res{6.04}{0.00} & \res{7.83}{0.02} & \third{38.08} & \third{5.10} \\
    & ACC & \res{70.33}{1.98} & \res{75.92}{3.26} & \res{35.57}{0.60} & \second{\res{81.86}{1.59}} & \second{\res{40.45}{0.62}} & \res{16.49}{0.22} & \res{28.48}{0.00} & \res{12.15}{0.05} & \res{2.97}{0.03} & \res{12.75}{0.07} & \second{37.70} & 6.00 \\
\addlinespace[1pt]
\multirow{2}{*}{MAGI} & NMI & \second{\res{58.94}{0.41}} & \res{68.65}{0.16} & \third{\res{46.53}{0.16}} & \res{72.53}{0.19} & \res{44.58}{0.14} & \res{11.24}{0.46} & \res{6.31}{0.16} & \res{41.34}{0.02} & \res{8.76}{0.02} & \second{\res{9.27}{0.20}} & 36.82 & 5.75 \\
    & ACC & \second{\res{74.22}{0.57}} & \res{76.66}{0.18} & \third{\res{37.32}{0.74}} & \res{64.78}{1.38} & \res{33.15}{0.34} & \best{\res{18.49}{0.83}} & \res{24.28}{0.46} & \third{\res{13.25}{0.07}} & \res{3.90}{0.03} & \res{14.37}{0.76} & 36.04 & \third{5.60} \\
\addlinespace[1pt]
\multirow{2}{*}{NS4GC} & NMI & \best{\res{59.40}{0.48}} & \best{\res{72.62}{0.79}} & \best{\res{48.39}{0.24}} & \res{56.71}{0.05} & \best{\res{54.63}{0.14}} & \best{\res{15.28}{0.17}} & \res{6.19}{0.01} & \second{\res{41.64}{0.01}} & \res{7.19}{0.04} & \best{\res{10.05}{0.04}} & 37.21 & \second{3.40} \\
    & ACC & \best{\res{74.92}{0.59}} & \best{\res{79.21}{0.03}} & \best{\res{39.01}{1.07}} & \res{43.80}{0.52} & \best{\res{41.83}{0.40}} & \second{\res{18.40}{0.42}} & \res{24.27}{0.06} & \second{\res{13.28}{0.09}} & \res{3.19}{0.01} & \res{15.69}{0.14} & 35.36 & \second{4.40} \\
\addlinespace[1pt]
\midrule
\multirow{2}{*}{\texttt{OFAG}} & NMI & \res{57.06}{0.00} & \third{\res{71.45}{0.00}} & \res{45.08}{0.19} & \best{\res{85.94}{0.00}} & \third{\res{52.77}{0.00}} & \res{11.61}{0.19} & \best{\res{9.27}{0.00}} & \best{\res{41.78}{0.02}} & \third{\res{10.63}{0.05}} & \second{\res{9.27}{0.00}} & \best{39.49} & \best{3.05} \\
    & ACC & \third{\res{74.04}{0.00}} & \third{\res{79.05}{0.00}} & \res{34.75}{0.13} & \best{\res{88.68}{0.00}} & \third{\res{38.51}{0.04}} & \res{17.38}{0.24} & \third{\res{29.02}{0.00}} & \res{12.57}{0.05} & \res{4.78}{0.02} & \best{\res{17.15}{0.00}} & \best{39.59} & \best{3.20} \\
\bottomrule
\end{tabular}%
}
\end{table*}

\subsection{Connection to Prior-Data Fitted Networks}
\label{sec:pfn_connection}

The prior-fitted clustering objective is not merely a convenient training recipe; it has a precise Bayesian interpretation that places \texttt{OFAG} within the PFN framework~\citep{PFN,PFNPosition}.

\begin{proposition}[\texttt{OFAG} as a Prior-Data Fitted Network]
\label{prop:ofag_pfn}
Minimizing $\mathbb{E}_{(A,X,y)\sim p(\mathcal{D})}[\mathcal{L}_{\mathrm{comp}}(\theta;A,X,y)]$ is equivalent to minimizing the Prior-Data Negative Log-Likelihood of a Bayesian model with prior $p(\mathcal{D})$ and a vMF mixture likelihood, with empty context $\mathcal{D}_{\mathrm{c}}=\emptyset$.
Consequently, a sufficiently expressive \texttt{OFAG} trained to convergence approximates the posterior predictive distribution induced by $p(\mathcal{D})$.
\end{proposition}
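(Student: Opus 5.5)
The plan is to rewrite the compactness objective as a prior-data NLL and then invoke the standard PFN argument. By Eq.~\eqref{eq:compactness_loss} and Eq.~\eqref{eq:vmf_posterior}, for a fixed synthetic task $(A,X,y)$ we have $\mathcal{L}_{\mathrm{comp}}=-\frac{1}{N}\sum_i \log q_\theta(y_i\mid z_i)$, where $z_i$ is a deterministic function of $(A,X)$ through SwiFRT, and $\hat{\mu}_k$ is a deterministic function of $(A,X,y)$ via Eq.~\eqref{eq:prototype}. So I will define the model's predictive distribution for node $i$ as $q_\theta(y_i\mid i,A,X)$, with the vMF mixture posterior as the parametric form. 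This is exactly the softmax in Eq.~\eqref{eq:vmf_posterior}, since by Proposition~\ref{prop:vmf_mle} the normalizer $Z_F(\kappa)$ cancels. Taking the expectation over $p(\mathcal{D})$ then gives
\[
\mathbb{E}[\mathcal{L}_{\mathrm{comp}}]=\mathbb{E}_{(A,X,y)}\Bigl[\tfrac{1}{N}\textstyle\sum_i -\log q_\theta(y_i\mid i,A,X)\Bigr].
\]
This equals $\mathbb{E}_{(A,X)}\mathbb{E}_{i\sim\mathrm{Unif}[N]}\mathbb{E}_{y_i\mid A,X}[-\log q_\theta(y_i\mid i,A,X)]$, which is $\mathcal{L}_{\mathrm{PFN}}$ with query $x=(i,A,X)$ and empty labeled context $\mathcal{D}_{\mathrm{c}}=\emptyset$. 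The unlabeled graph plays the role of the query input rather than of the context.

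Next I will decompose the NLL. For each query, $\mathbb{E}[-\log q_\theta]=H\bigl(p(y_i\mid i,A,X)\bigr)+\mathrm{KL}\bigl(p(\cdot\mid i,A,X)\,\|\,q_\theta(\cdot\mid i,A,X)\bigr)$, and the entropy term does not depend on $\theta$. So the objective is minimized exactly when the expected KL vanishes, i.e., when $q_\theta$ equals the posterior predictive $p(y_i\mid A,X)=\int p(y_i\mid A,X,\phi)\,p(\phi\mid A,X)\,d\phi$ almost everywhere under the prior. Here $\phi$ collects the CAGM latents (cluster parameters, $B$, $\pi$, and so on). The conclusion about a sufficiently expressive model trained to convergence then follows as in the PFN argument.

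The main obstacle is label-permutation non-identifiability. Since cluster indices are exchangeable under CAGM, the true posterior $p(y_i=k\mid A,X)$ is uniform in $k$, and $\hat{\mu}_k$ itself uses $y$, which would be leakage in a literal PFN. I will handle this by conditioning on the induced partition, treating the prototypes as label-anchored parameters. Concretely, I will state the Bayesian model as a likelihood over assignments given the prototypes computed from the task, with the posterior predictive taken over partitions modulo relabeling. Equivalently, I will interpret the prediction as the posterior over which group node $i$ shares with the others. I also expect to qualify ``sufficiently expressive'' to mean that the class of realizable $q_\theta$ contains the posterior predictive in this prototype-softmax form. That is a realizability assumption I would state explicitly rather than prove.
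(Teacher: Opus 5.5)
Your first two paragraphs reproduce the paper's proof. The paper takes $\mathcal{D}_{\mathrm{c}}=\emptyset$, treats every node as a query, declares $q_\theta$ to be the vMF softmax with on-the-fly prototypes, reads off $\mathcal{L}_{\mathrm{PFN}}=\mathbb{E}[\mathcal{L}_{\mathrm{comp}}]$, and cites the PFN equivalence theorem for the expected-KL conclusion. Your query $(i,A,X)$ is actually cleaner than the paper's $x=z_i$, which itself depends on $\theta$. The paper never raises the issue in your third paragraph, and your diagnosis there is correct. However, it invalidates your own second step. Because $\hat\mu_k$ is built from $y$, and $\hat\mu_{y_i}$ even contains $z_i$ (decisive for small clusters, and CAGM allows singletons), the map $k\mapsto q_\theta(k\mid\cdot)$ inside $\mathcal{L}_{\mathrm{comp}}$ changes with the realized labels. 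So the loss is not a cross-entropy, and the split into a $\theta$-free entropy plus a KL is unavailable. Concretely, take a task whose partition is essentially determined by $(A,X)$. An encoder that separates it gives $\mathcal{L}_{\mathrm{comp}}<\log K$ for every compatible labeling. Yet every predictive depending only on $(i,A,X,K)$ has conditional expected log-loss at least $\log K$, because exchangeability makes $p(y_i\mid A,X,K)$ uniform.

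Your repair does not close this gap. Conditioning on the task-computed prototypes makes the conditioning variable $\hat\mu(f_\theta(A,X),y)$ depend on $\theta$, so the target $p(y_i\mid A,X,\hat\mu)$ and its entropy move with $\theta$. The objective is then no longer a constant plus the KL to a fixed posterior predictive induced by $p(\mathcal{D})$. A minimizer can gain simply by making the prototypes more label-revealing, so the ``consequently'' clause fails. Passing to partitions modulo relabeling does not help either: $\prod_i q_\theta(y_i\mid\cdot)$ is not a normalized distribution over partitions given $(A,X)$, so $\mathcal{L}_{\mathrm{comp}}$ is a pseudo-likelihood evaluated against the true partition's own prototypes.

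Your last reformulation, predicting which group node $i$ shares with the others, is the right idea. Making it rigorous needs leave-one-out prototypes built from $\{(z_j,y_j)\}_{j\neq i}$, and it treats $y_{-i}$ as an explicitly non-empty context. The standard PFN decomposition then goes through, but for a modified loss and a modified statement. Without such a change, the proposition can only be asserted as a formal identification, which is what the paper effectively does. Finally, your realizability caveat is substantive. Since $\|\hat\mu_k\|\le 1$ and $\|z_i\|=1$, the logits lie in $[-1/\tau,1/\tau]$, so with fixed $\tau$ confident posteriors cannot be represented however expressive the encoder is.
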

\begin{proof}[Proof sketch]
With $\mathcal{D}_{\mathrm{c}}=\emptyset$ and all nodes as query points, the Prior-Data NLL equals
$\mathbb{E}\bigl[-\frac{1}{N}\sum_i\log q_\theta(y_i\mid z_i)\bigr]=\mathbb{E}[\mathcal{L}_{\mathrm{comp}}]$,
where $q_\theta(y_i\mid z_i)$ is the vMF posterior from Proposition~\ref{prop:vmf_mle}.
By~\citep{PFN,PFNPosition}, minimizing the Prior-Data NLL is equivalent to minimizing the expected KL divergence between $q_\theta$ and the true posterior predictive distribution.
The full proof is in Appendix~\ref{app:prop_pfn}.
\end{proof}

The dispersion term $\mathcal{L}_{\mathrm{disp}}$ is a geometric regularizer preventing prototype collapse; it does not affect the PFN interpretation of $\mathcal{L}_{\mathrm{comp}}$.
Together, Propositions~\ref{prop:vmf_mle} and~\ref{prop:ofag_pfn} establish that \texttt{OFAG}'s training objective is grounded in both probabilistic generative modeling (vMF mixture MLE) and Bayesian inference theory (Prior-Data NLL minimization).

\subsection{Zero-Shot Inference}
\label{sec:inference}

After pre-training, all parameters are frozen.
Given an unseen attributed graph $(A,X)$, \texttt{OFAG} produces node embeddings in a single forward pass,
\begin{equation}
    Z = \operatorname{SwiFRT}_{\theta^\ast}(X,A), \qquad z_i = Z_{i,:}/\|Z_{i,:}\|_2,
    \label{eq:inference}
\end{equation}
which are then clustered by a standard algorithm such as $k$-means.
No gradient updates, graph-specific training, or validation labels are required.
The entire burden of learning how to combine attribute and structural evidence across diverse feature types, structural regimes, and cluster configurations is absorbed by prior fitting.
At inference, \texttt{OFAG} follows a pure foundation-model workflow: pre-train once, freeze, and apply to any new graph.

\begin{table*}[t]
    \centering
    \caption{Wall-clock clustering time on benchmark datasets. Following PyAGC, all methods are evaluated on a single NVIDIA Tesla V100 GPU with 32GB memory. Cora and Photo are reported in seconds, while the remaining datasets and total time are reported in minutes. Lower is better. The $\dagger$ mark indicates that \texttt{OFAG} is run on CPU due to GPU out-of-memory.}
    \label{tab:efficiency}
    \scriptsize
    \setlength{\tabcolsep}{2.6pt}
    \resizebox{\textwidth}{!}{%
    \begin{tabular}{lrrrrrrrrrrrr}
    \toprule
    Method & Cora (s) & Photo (s) & HM (m) & Flickr (m) & ArXiv (m) & Reddit (m) & MAG (m) & Pokec (m) & Products (m) & WebTopic (m) & Total (m) & Mean NMI \\
    \midrule
    Node2Vec & 24.57 & 27.96 & 1.50 & 20.97 & 5.05 & 22.67 & 77.58 & 236.53 & 64.94 & 75.92 & 506.04 & 36.87 \\
    DAEGC    & 35.80 & 156.84 & 1.10 & 4.87 & 9.06 & 14.28 & 27.52 & 14.28 & 37.28 & 11.11 & 122.71 & 25.36 \\
    DinkNet  & 13.31 & 36.07 & 0.14 & 1.40 & 3.72 & 4.49 & 5.46 & 6.18 & 85.27 & 4.27 & 111.75 & 31.54 \\
    MinCut   & 5.51 & 6.78 & 5.11 & 0.34 & 1.00 & 6.73 & 16.01 & 24.15 & 32.55 & 8.37 & 94.46 & 29.19 \\
    DMoN     & 3.58 & 6.99 & 5.08 & 0.34 & 1.37 & 1.22 & 15.46 & 22.91 & 22.88 & 6.75 & \second{76.19} & 30.05 \\
    Neuromap & 4.73 & 12.89 & 5.17 & 0.47 & 1.62 & 6.73 & 14.56 & 23.07 & 23.00 & 4.97 & \third{79.88} & 30.22 \\
    GAE      & 32.03 & 178.93 & 0.81 & 2.56 & 12.77 & 30.58 & 26.54 & 10.16 & 26.12 & 6.18 & 119.24 & 30.54 \\
    DGI      & 15.50 & 11.14 & 0.54 & 1.12 & 2.76 & 1.93 & 10.54 & 10.37 & 50.32 & 6.16 & 84.18 & 34.85 \\
    CCASSG   & 0.46 & 27.92 & 0.18 & 0.09 & 3.04 & 1.21 & 31.14 & 117.43 & 45.16 & 122.30 & 321.02 & 33.06 \\
    S3GC     & 124.74 & 148.44 & 0.98 & 15.49 & 3.91 & 32.38 & 41.78 & 71.17 & 29.68 & 145.98 & 345.92 & \second{38.08} \\
    MAGI     & 51.25 & 18.68 & 41.84 & 43.31 & 28.80 & 56.98 & 86.99 & 176.43 & 375.99 & 115.36 & 926.87 & 36.82 \\
    NS4GC    & 9.77 & 64.36 & 4.47 & 0.85 & 22.94 & 12.84 & 41.04 & 55.88 & 37.24 & 30.75 & 207.25 & \third{37.21} \\
    \midrule
    \texttt{OFAG} & \textbf{0.40} & \textbf{1.65} & \textbf{0.05} & \textbf{0.05} & \textbf{0.12} & \textbf{0.42} & \textbf{0.96} & \textbf{1.11}$^\dagger$ & \textbf{6.21}$^\dagger$ & \textbf{3.47}$^\dagger$ & \best{12.43} & \best{39.49} \\
    \bottomrule
    \end{tabular}%
    }
    \end{table*}

\section{Experiments}
\label{sec:exp}

\subsection{Experimental Setup}
\label{sec:exp_setup}

\vpara{Datasets.}
We evaluate \texttt{OFAG} following the protocol of PyAGC~\citep{PyAGC} on ten benchmark datasets spanning citation, co-purchase, social, and web domains: Cora, Photo, HM, Flickr, ArXiv, Reddit, MAG, Pokec, Products, and WebTopic~\citep{Position, OGB, GraphLand}.
The benchmark covers diverse graph scales, feature modalities, and homophily levels, with 5 homophilous and 5 heterophilous graphs.
Detailed dataset statistics and descriptions are provided in Appendix~\ref{app:datasets}.

\vpara{Baselines.}
We compare against 16 methods spanning traditional clustering, deep decoupled clustering, and deep joint clustering: KMeans~\citep{KMeans}, Node2Vec~\citep{Node2Vec}, SSGC~\citep{SSGC}, SAGSC~\citep{SAGSC}, MS2CAG~\citep{MS2CAG}, GAE~\citep{GAE}, DGI~\citep{DGI}, CCASSG~\citep{CCASSG}, S3GC~\citep{S3GC}, NS4GC~\citep{NS4GC}, MAGI~\citep{MAGI}, DAEGC~\citep{DAEGC}, DinkNet~\citep{DinkNet}, MinCut~\citep{MinCut}, DMoN~\citep{DMoN}, and Neuromap~\citep{Neuromap}.
We follow the official PyAGC implementations and evaluation settings; details are in Appendix~\ref{app:baselines}.

\vpara{Metrics.}
We report NMI, ARI, ACC, and Macro-F1, averaged over 5 runs with different seeds, and wall-clock running time.
For baselines, running time covers on-graph learning plus clustering; for \texttt{OFAG}, it covers only zero-shot inference (frozen forward pass and $k$-means), excluding the one-time pre-training cost.

\vpara{Implementation.}
\texttt{OFAG} is pre-trained once on 80,000 synthetic graphs (batch size 8, 10,000 steps) using AdamW (lr $10^{-4}$, wd $10^{-4}$), taking $\approx1.5$ hours on a single NVIDIA A800 40G GPU.
The SwiFRT hidden dimension is 8, num\_heads=4, filter order $L=10$, temperature $\tau=1.0$, and dispersion weight $\lambda=0.1$.
A fixed set of 16 synthetic graphs is held out as a validation set to monitor pre-training; the checkpoint with the lowest validation loss is selected.
At inference time, all parameters are frozen: \texttt{OFAG} performs a single forward pass followed by $k$-means, with no graph-specific training, fine-tuning, or hyperparameter search on any benchmark dataset.
Our code is available at \url{https://github.com/Cloudy1225/OFAG}.
More implementation and evaluation details are in Appendix~\ref{app:implementation}.

\subsection{Performance Comparison}
\label{sec:performance}

\vpara{Overall performance.}
Tables~\ref{tab:nmi_acc} and~\ref{tab:ari_f1} report clustering performance across all ten datasets.
\texttt{OFAG} achieves the best mean score on all four metrics (NMI 39.49, ACC 39.59, ARI 26.94, F1 33.54) and the lowest average rank on every metric, demonstrating consistent superiority rather than gains concentrated on a few datasets.
The next-best methods on mean NMI and F1 are SAGSC (38.47, 32.36) and S3GC (37.70 ACC, 26.06 ARI), both of which require full per-graph optimization; \texttt{OFAG} surpasses them by a larger margin on mean scores than the gap between any two consecutive dataset-specific methods.
Crucially, this is achieved by a single frozen model performing zero-shot inference without any graph-specific training, fine-tuning, or hyperparameter search.

\vpara{Homophilous graphs.}
\texttt{OFAG} is strongest on Reddit, leading all methods by a wide margin (NMI 85.94 vs.\ S3GC 83.45; F1 81.17 vs.\ SAGSC 70.43); the $+10.7$ F1 gap suggests that the CAGM prior's diverse cluster-proportion sampling is especially beneficial for Reddit's imbalanced community structure.
On Cora, Photo, and ArXiv, \texttt{OFAG} ranks second or third, trailing specialized baselines by 2--4 NMI points, a modest gap attributable to dataset-specific optimization advantages that zero-shot inference cannot replicate.

\vpara{Heterophilous graphs.}
\texttt{OFAG} shows clear advantages where smoothing-based methods degrade.
It ranks first on Flickr (NMI 9.27, F1 23.89), MAG (NMI 41.78, F1 10.28), and WebTopic (ACC 17.15), consistent with its pre-training on a comprehensive homophily spectrum that prevents over-reliance on edge propagation. 
On HM and Pokec, \texttt{OFAG} does not rank in the top three on most metrics.
This may be because both datasets are dense tabular graphs with many fine-grained clusters and attribute-structure correlations that diverge from typical prior conditions, where per-graph hyperparameter tuning provides a non-trivial advantage.

\subsection{Efficiency Comparison}
\label{sec:efficiency}

Table~\ref{tab:efficiency} jointly reports wall-clock running times and Mean NMI, enabling a direct quality-vs.-cost comparison.
\texttt{OFAG} is simultaneously the fastest and the most accurate method: it achieves the best Mean NMI (39.49) while completing all ten datasets in only 12.43 minutes total.
The next fastest baseline is DMoN (76.19 minutes, $6.1\times$ slower) with a Mean NMI of only 30.05; the second-best on NMI is S3GC (38.08) at 345.92 minutes, $27.8\times$ slower than \texttt{OFAG}.
This reveals a sharp Pareto gap: no dataset-specific method comes close to the quality of \texttt{OFAG} at anywhere near its speed.
The most extreme case is MAGI (926.87 minutes, $74.6\times$ slower) which achieves only 36.82 Mean NMI, worse than \texttt{OFAG} by 2.67 points.
The efficiency advantage is most pronounced on individual large graphs: on ArXiv, \texttt{OFAG} finishes in 0.12 minutes while NS4GC, the best-performing baseline on that dataset, requires 22.94 minutes ($191\times$ slower) yet scores lower on average NMI.
This gap widens further when accounting for hyperparameter tuning: dataset-specific methods must be tuned on each target graph without labels, often requiring tens of repeated runs, pushing practical speedups to the thousand-fold level.
\texttt{OFAG} avoids this cost entirely: its inference consists of a single forward pass followed by $k$-means, with no per-graph adaptation of any kind.
For the three largest datasets (Pokec, Products, WebTopic), \texttt{OFAG} runs on CPU due to GPU memory constraints, yet still finishes faster than all GPU-accelerated baselines in total, further underscoring the practical scalability of prior-fitted inference.
A formal time complexity analysis of \texttt{OFAG} is provided in Appendix~\ref{app:complexity}.

\subsection{Ablation: Model Components}
\label{sec:ablation_components}

Table~\ref{tab:ablation_components} ablates five design choices grouped into three categories: synthetic prior, backbone encoder, and training objective.

\vpara{Prior.}
Including mixed attribute types (binary, categorical, count-valued) in CAGM yields consistent gains on both homophilous and heterophilous splits, confirming that broader coverage over real-world feature modalities improves zero-shot generalization.
Covering the full homophily spectrum ($h\!\in\![0.1,0.9]$) is similarly important: it equips the model with an adaptive aggregation strategy rather than a fixed smoothing bias, benefiting heterophilous graphs the most.
More prior ablations are provided in Appendix~\ref{app:ablation_homophily}.

\vpara{Backbone.}
The Transformer fusion in SwiFRT is essential for heterophilous graphs ($+2.71$ NMI over the MLP variant), where diffusion responses are non-monotone across orders and cross-order interactions must be explicitly modeled.
$\ell_2$ normalization contributes the largest single gain in the entire table: projecting node embeddings onto the unit hypersphere removes scale variation across graphs and feature dimensions, producing a geometry in which $k$-means cluster boundaries are well-calibrated.

\vpara{Objective.}
$\mathcal{L}_{\mathrm{disp}}$ provides a clear benefit, particularly on heterophilous benchmarks ($+2.01$ NMI).
As formalized in Remarks~\ref{rem:etf}--\ref{rem:complementarity}, it drives prototypes toward an equiangular tight frame, maximizing inter-cluster angular separation across the hypersphere.
This geometric regularization is most consequential when the compactness signal is weaker---precisely the regime of heterophilous graphs where attribute-structure agreement is limited and inter-prototype spacing must compensate.

\begin{table}[h]
\centering
\caption{Component ablation. Average NMI/ACC over the five homophilous (Homo), five heterophilous (Hetero), and all ten graphs (All).}
\label{tab:ablation_components}
\small
\setlength{\tabcolsep}{4pt}
\begin{tabular}{lcccccc}
\toprule
\multirow{2}{*}{Variant} & \multicolumn{2}{c}{Homo Avg} & \multicolumn{2}{c}{Hetero Avg} & \multicolumn{2}{c}{All Avg} \\
\cmidrule(lr){2-3}\cmidrule(lr){4-5}\cmidrule(lr){6-7}
    & NMI & ACC & NMI & ACC & NMI & ACC \\
\midrule
\texttt{OFAG} (full)                           & \textbf{62.46} & \textbf{63.01} & \textbf{16.51} & \textbf{16.18} & \textbf{39.49} & \textbf{39.59} \\
\midrule
Continuous features only                       & 62.02 & 62.35 & 16.12 & 15.83 & 39.07 & 39.09 \\
Homophilous graphs only & 61.46 & 59.24 & 15.65 & 15.22 & 38.56 & 37.23 \\
$g_\theta$: Transformer $\to$ MLP                  & 60.24 & 59.41 & 13.80 & 14.32 & 37.02 & 36.86 \\
w/o $\ell_2$ normalization                     & 54.72 & 48.70 & 14.10 & 15.27 & 34.41 & 31.98 \\
w/o dispersion  $\mathcal{L}_{\mathrm{disp}}$              & 61.40 & 59.54 & 14.50 & 14.54 & 37.95 & 37.04 \\
\bottomrule
\end{tabular}
\end{table}

\subsection{Pre-training Dynamics}
\label{sec:pretrain_dynamics}

Figure~\ref{fig:pretrain} shows the validation loss and NMI evaluated every 50 steps on a fixed set of 16 synthetic graphs throughout pre-training.
Both curves share a consistent two-phase pattern: a steep improvement in the first $\sim$1{,}000 steps, followed by a gradual plateau that persists for the remainder of training.
The monotonically increasing NMI directly reflects that the quality of the learned representations improves over training: the model continuously discovers new structural and attributive patterns in the synthetic prior, yielding progressively better cluster separability.
The validation loss decreases in tandem, confirming that the pre-training objective faithfully drives clustering quality rather than reducing a decoupled surrogate.
After the initial rapid gains, both metrics stabilize with only low-amplitude fluctuations, indicating that the model has reached a stable convergence point rather than continuing to overfit or drift.
Together, these dynamics confirm that the CAGM prior provides a sufficiently diverse and well-structured training signal for the model to learn meaningful clustering representations, laying the foundation for the strong zero-shot performance observed on real-world graphs in Section~\ref{sec:performance}.
We further evaluate different pre-training checkpoints directly on the benchmark graphs; Figure~\ref{fig:pt_realistic_nmi} shows the NMI trajectories on homophilous, heterophilous, and all ten graphs, with detailed analysis provided in Appendix~\ref{app:pretrain_dynamics}.

\begin{figure}[ht]
    \centering
    \includegraphics[width=0.49\linewidth]{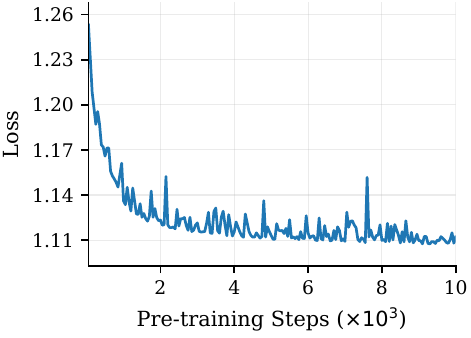}
    \hfill
    \includegraphics[width=0.49\linewidth]{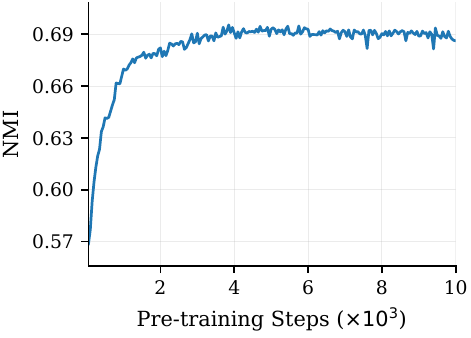}
    \caption{Validation loss and NMI on the fixed synthetic validation set evaluated every 50 steps during pre-training.}
    \label{fig:pretrain}
\end{figure}

\subsection{Representation Visualization}
\label{sec:vis}

To qualitatively assess what \texttt{OFAG} learns, we visualize node representations on the Photo dataset.
We run a single frozen forward pass to obtain the hyperspherical node embeddings, project them to $\mathbb{S}^2$ via PCA followed by re-normalization, and render the resulting 3D unit-sphere plot; Figure~\ref{fig:sphere} shows the raw node features and the \texttt{OFAG} embeddings side by side.
The raw features (Figure~\ref{fig:sphere}, left), when projected onto the sphere, form a diffuse, largely undifferentiated cloud with no visible cluster organization.
In contrast, the \texttt{OFAG} embeddings (Figure~\ref{fig:sphere}, right) exhibit well-separated, compact clusters distributed across the sphere surface.
This geometric reorganization is entirely zero-shot: the model has never observed Photo during pre-training, yet its encoder maps nodes from different categories into distinct hyperspherical regions.
The result provides direct visual evidence that pre-training on the CAGM prior instills a general inductive bias toward cluster-discriminative geometry, enabling the model to transfer structured representations to unseen real-world graphs without any task-specific adaptation.

\begin{figure}[ht]
    \centering
    \includegraphics[width=0.49\linewidth]{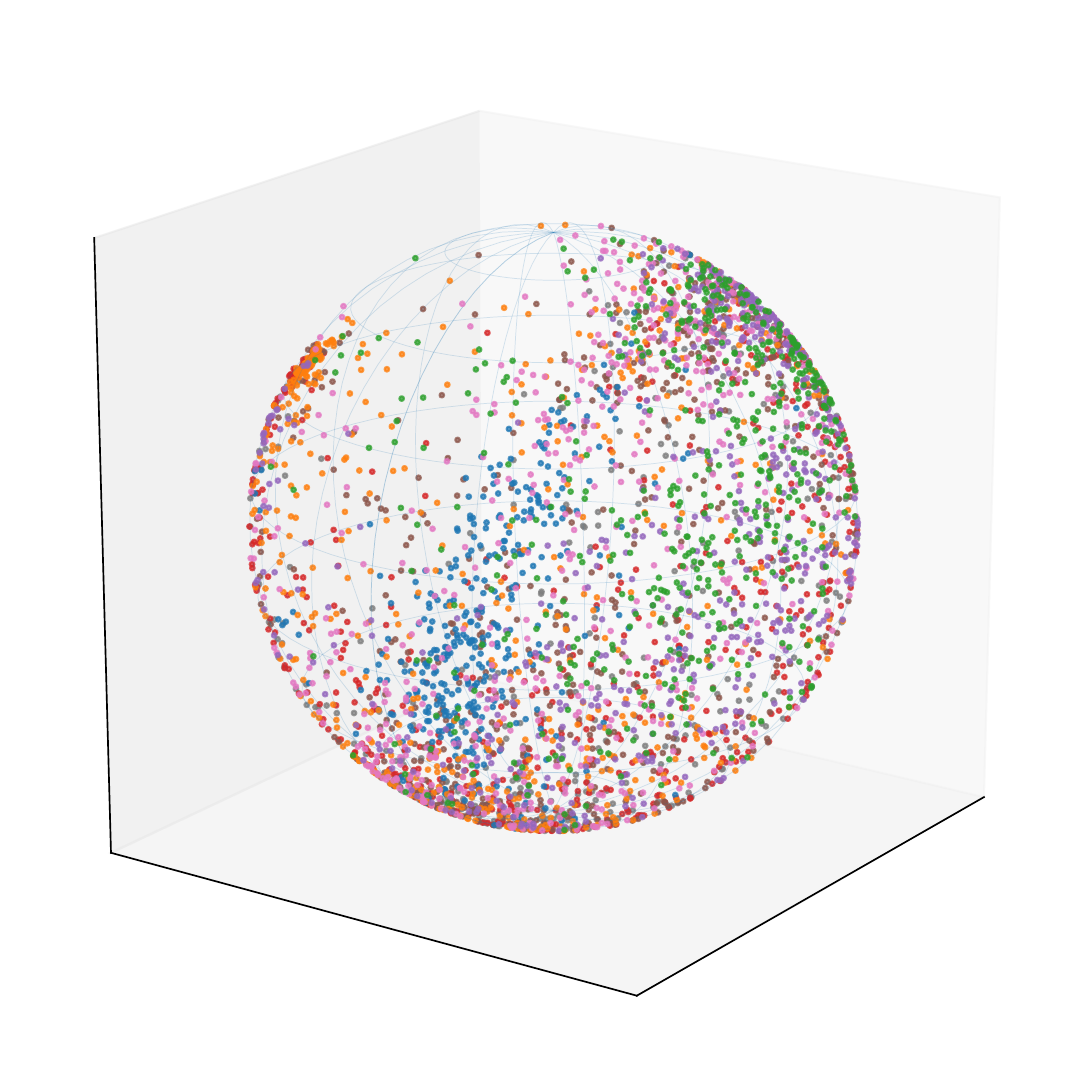}
    \hfill
    \includegraphics[width=0.49\linewidth]{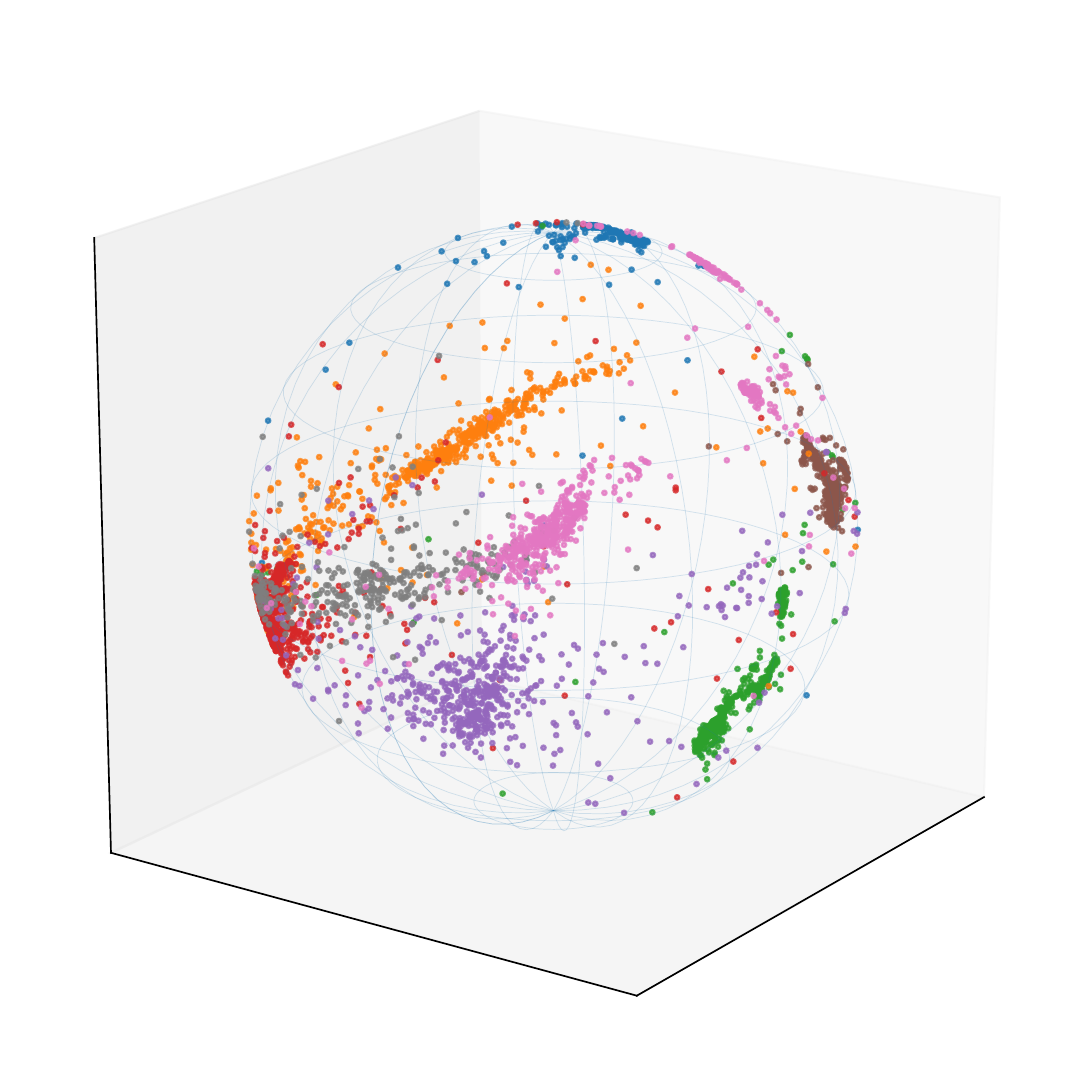}
    \caption{
    3D unit-sphere visualization of node representations on the Photo dataset, obtained by PCA projection and re-normalization. \textbf{Left:} raw node features are scattered uniformly with no discernible cluster structure. \textbf{Right:} \texttt{OFAG} embeddings form compact, well-separated clusters.
    }
    \label{fig:sphere}
\end{figure}

\vpara{Additional Results.}
Please kindly see the appendix for more analyses:
Appendix~\ref{app:pretrain_dynamics} analyzes how clustering quality on real graphs evolves across pre-training checkpoints; Appendices~\ref{app:ablation_L} and~\ref{app:ablation_lambda} examine sensitivity to the filter order and dispersion weight; Appendix~\ref{app:ablation_homophily} ablates the homophily coverage of the structural prior; and Appendix~\ref{app:complexity} provides a formal time complexity analysis.

\section{Conclusion}
\label{sec:conclusion}

We presented \texttt{OFAG}, the first one-for-all foundation model for attributed graph clustering.
By adapting the Prior-data Fitted Network paradigm to the unsupervised graph setting, \texttt{OFAG} is pre-trained once on synthetic graphs sampled from the CAGM prior and applied to arbitrary real-world graphs in a single forward pass.
Experiments on ten datasets show that a single frozen \texttt{OFAG} model achieves the best mean performance while being the fastest method overall.


\bibliographystyle{ACM-Reference-Format}
\bibliography{main}

\section*{Ethics and Privacy Statement}

This work proposes \texttt{OFAG}, a foundation model for unsupervised attributed graph clustering.
All benchmark datasets used in our experiments are publicly available and have been widely adopted in prior graph learning research; no new personal data was collected, and no human subjects were involved.
The pre-training data consists entirely of synthetic graphs sampled from the CAGM generative prior, which does not contain or encode any personal information.
The primary societal benefit of this work is efficiency: by eliminating per-graph model training and hyperparameter tuning, \texttt{OFAG} reduces the computational cost of graph clustering by orders of magnitude, lowering the barrier to deploying unsupervised graph analysis in resource-constrained settings.
As with any clustering method, downstream applications that group individuals (e.g., user segmentation or community detection in social networks) carry inherent fairness risks, since unsupervised partitions can reflect and amplify demographic biases present in the underlying graph structure and node attributes.
We encourage practitioners to audit clustering outputs for disparate impact before using them in high-stakes decisions.

\appendix

\section*{Appendix Overview}

\begin{description}[leftmargin=2em, labelindent=0em, itemsep=2pt]
  \item[Appendix~\ref{app:prior_details}] Details of the Synthetic Attributed Graph Prior
  \item[Appendix~\ref{app:thm_synthetic_real}] Proof of Theorem~1 (Synthetic-to-Real Generalization)
  \item[Appendix~\ref{app:prop_swifrt}] Proof of Proposition~1 (Dimension-Agnosticism and Feature-Permutation Equivariance)
  \item[Appendix~\ref{app:prop_vmf}] Proof of Proposition~2 (Compactness Loss as vMF MLE)
  \item[Appendix~\ref{app:prop_pfn}] Proof of Proposition~3 (\texttt{OFAG} as a Prior-Data Fitted Network)
  \item[Appendix~\ref{app:datasets}] Dataset Details
  \item[Appendix~\ref{app:baselines}] Baseline Details
  \item[Appendix~\ref{app:implementation}] Implementation and Evaluation Details
  \item[Appendix~\ref{app:pretrain_dynamics}] Pre-training Dynamics on Benchmark Graphs
  \item[Appendix~\ref{app:ablation_L}] Additional Experiments: Sensitivity to Filter Order $L$
  \item[Appendix~\ref{app:ablation_lambda}] Additional Experiments: Sensitivity to Dispersion Weight $\lambda$
  \item[Appendix~\ref{app:ablation_homophily}] Additional Experiments: Ablation on Homophily Coverage
  \item[Appendix~\ref{app:complexity}] Time Complexity Analysis
\end{description}

\begin{figure*}[t]
\centering
\subfigure[$N$=512, $E$=3,022, $K$=4, $h$=0.194]{\includegraphics[width=0.24\linewidth]{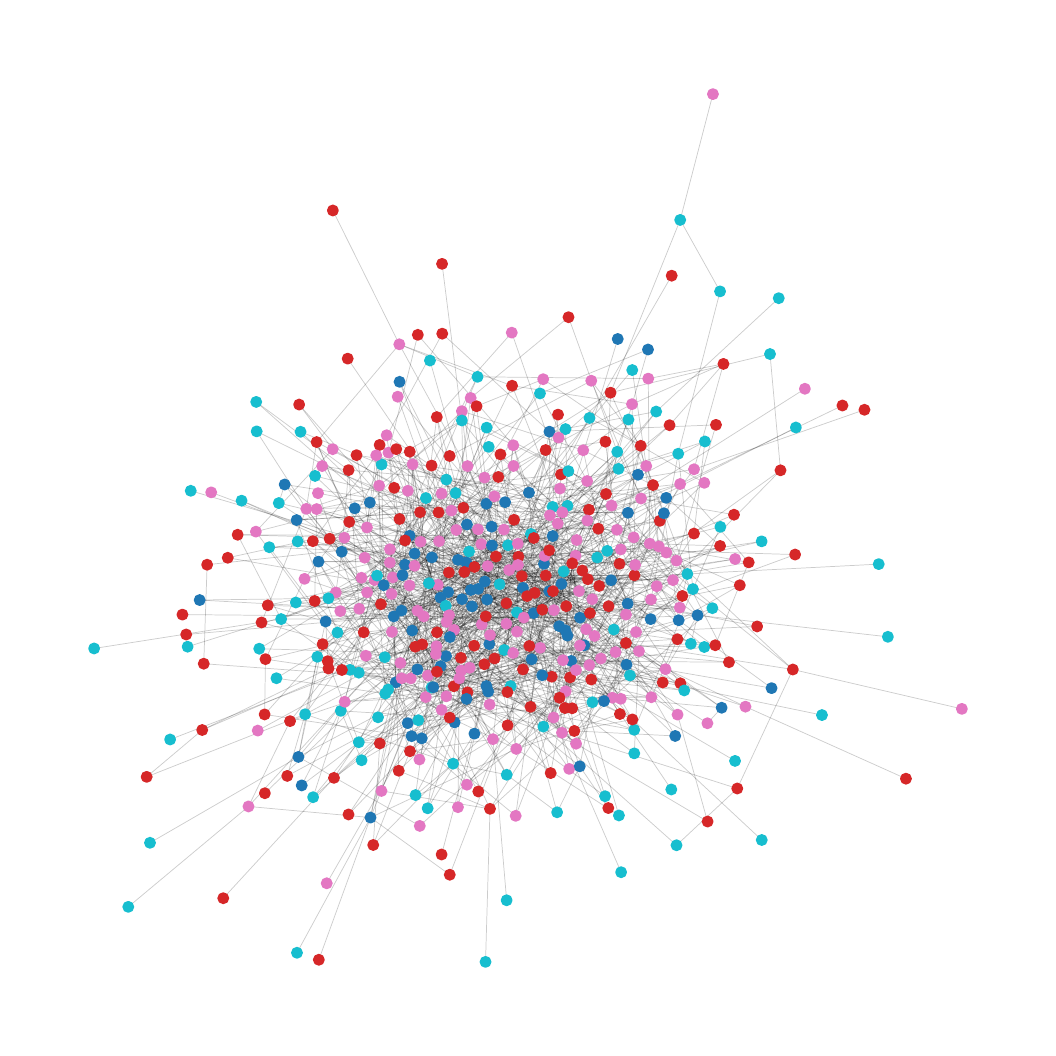}}
\subfigure[$N$=640, $E$=3,856, $K$=5, $h$=0.491]{\includegraphics[width=0.24\linewidth]{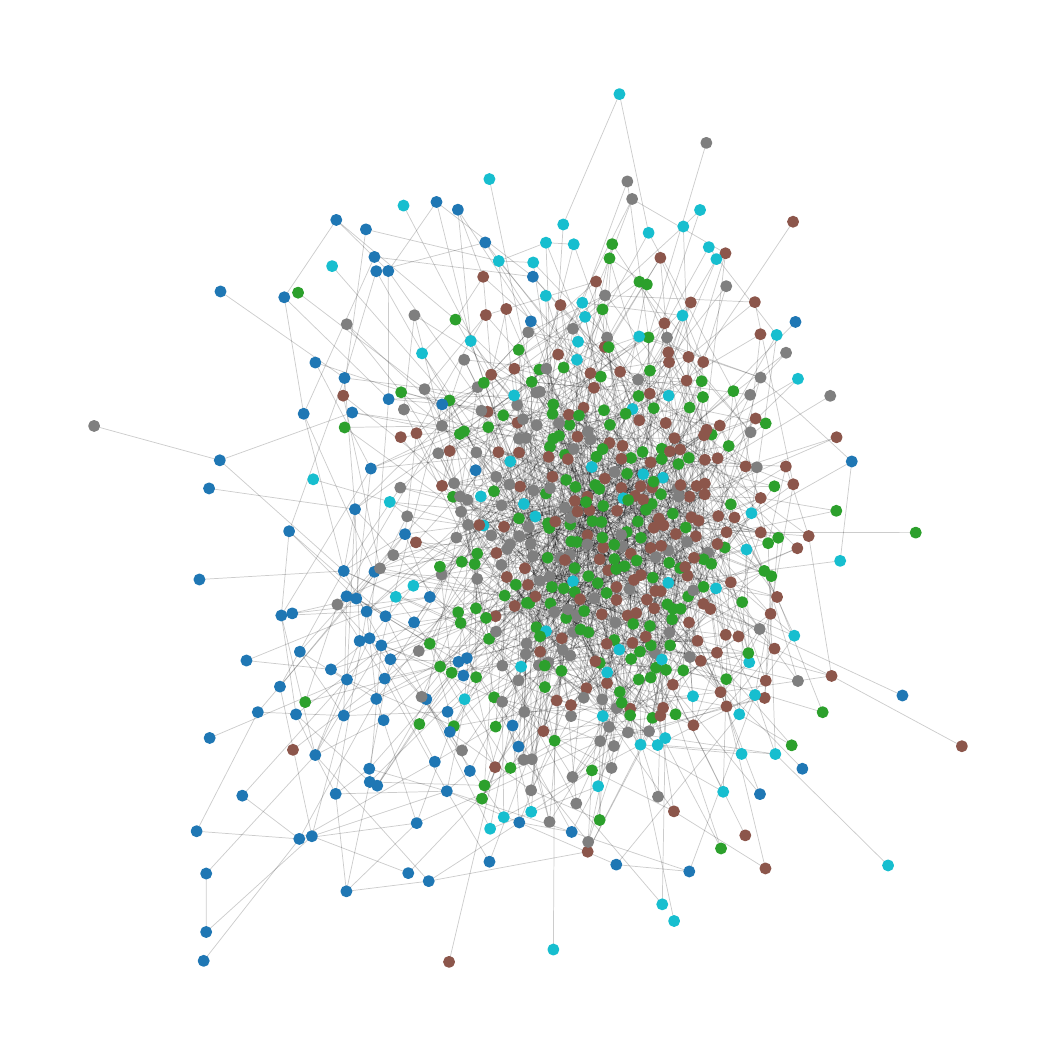}}
\subfigure[$N$=768, $E$=4,688, $K$=8, $h$=0.589]{\includegraphics[width=0.24\linewidth]{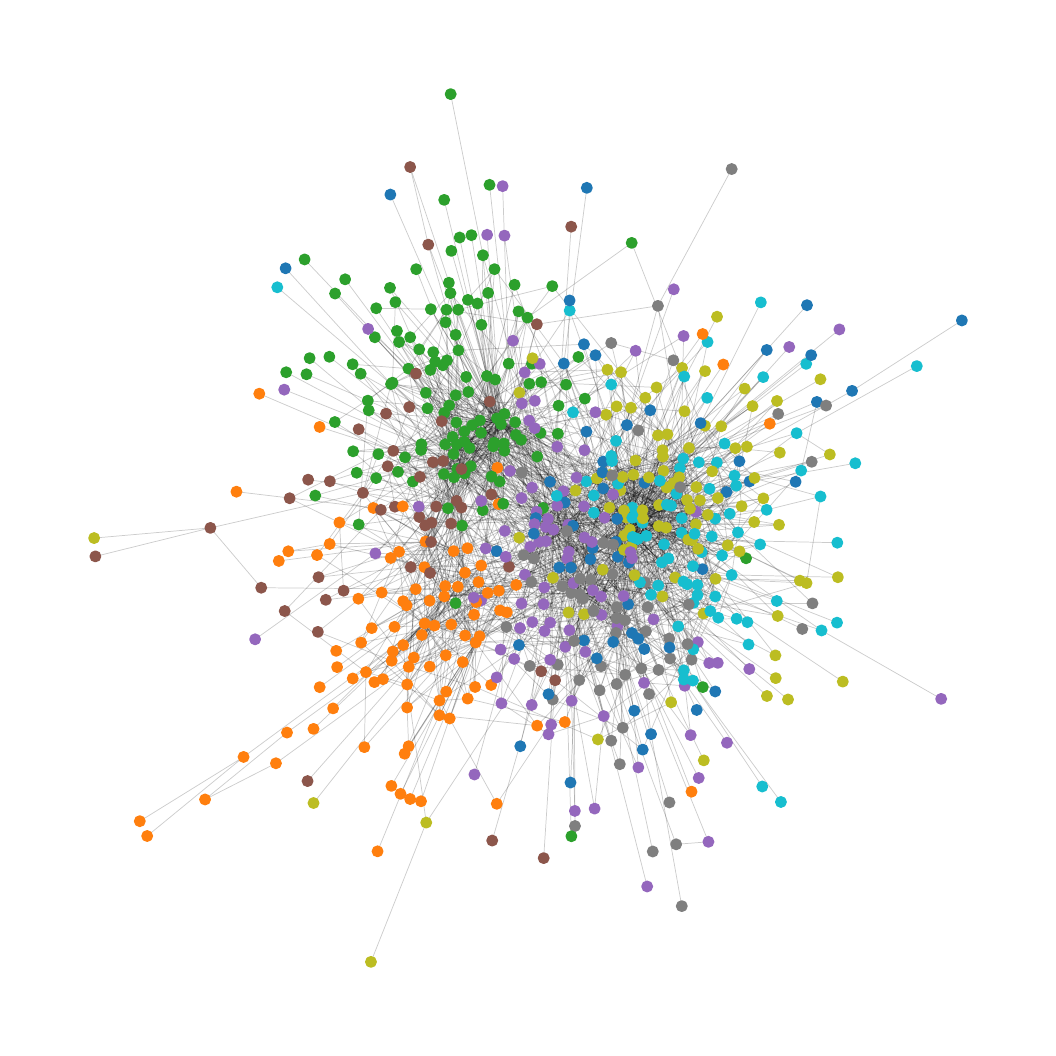}}
\subfigure[$N$=1,024, $E$=6,360, $K$=6, $h$=0.863]{\includegraphics[width=0.24\linewidth]{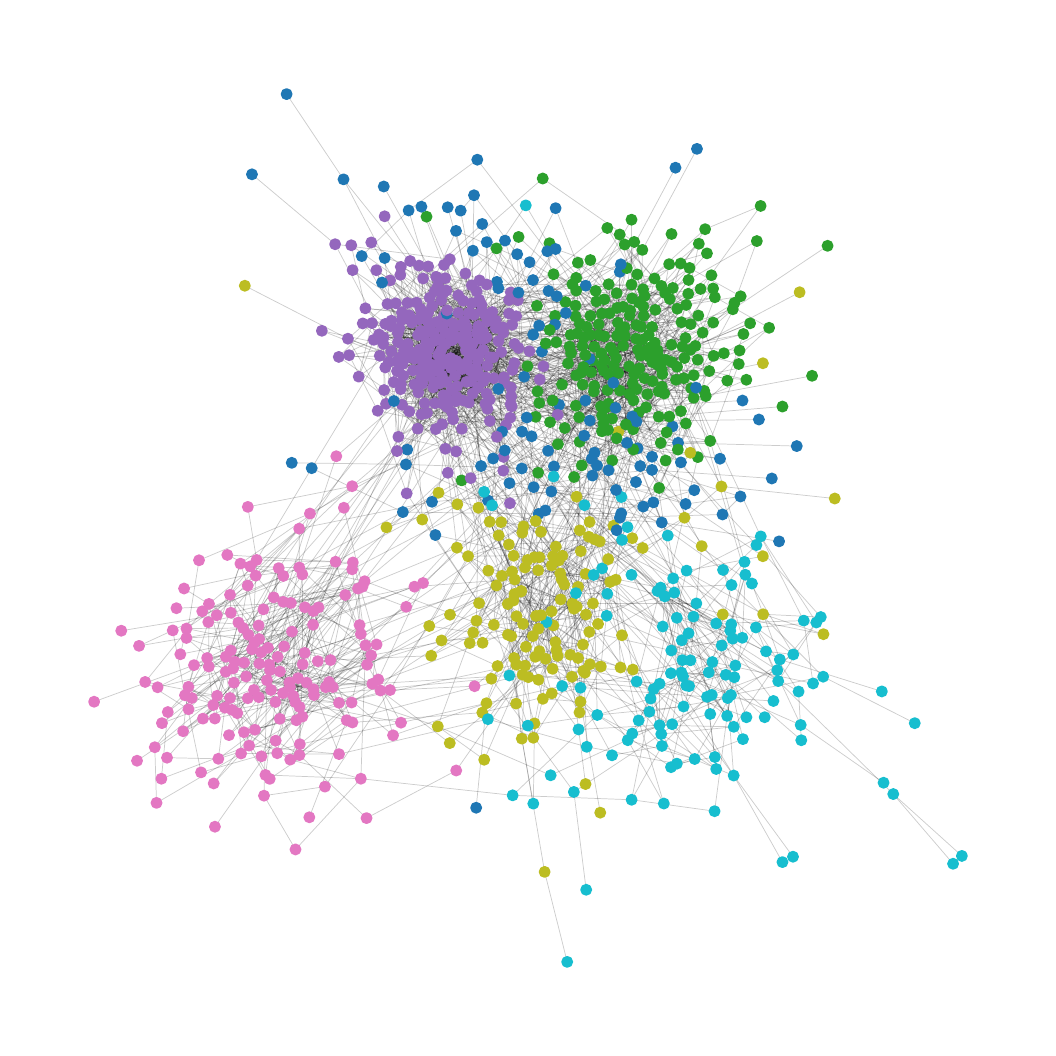}}
\caption{Representative synthetic graphs sampled from the CAGM prior, with varying numbers of nodes ($N$), edges ($E$), clusters ($K$), and edge homophily ($h$). Node colors indicate ground-truth cluster assignments. The four examples span the full structural spectrum: from strongly heterophilous ($h=0.194$, left) through near-neutral ($h=0.491$) and mildly homophilous ($h=0.589$) to strongly homophilous ($h=0.863$, right), illustrating the broad coverage of homophily regimes.}
\label{fig:cagm_examples}
\end{figure*}

\section{Details of Synthetic Attributed Graph Prior}
\label{app:prior_details}

This appendix describes the synthetic prior used for pre-training \texttt{OFAG}.
The purpose of the prior is not to exactly reproduce any single real-world graph family, but to generate a broad collection of attributed graph clustering tasks with diverse attribute distributions, graph structures, cluster imbalances, and modality reliabilities.
Figure~\ref{fig:cagm_examples} visualizes four representative synthetic graphs sampled from the CAGM prior, illustrating the diversity in size, cluster count, and structural regime.

\subsection{Latent Cluster Generation}

For each synthetic task, we sample the graph size, feature dimension, and number of clusters:
\begin{equation}
    N \sim p(N),
    \qquad
    F \sim p(F),
    \qquad
    K \sim p(K).
\end{equation}
We then sample cluster proportions from a Dirichlet distribution:
\begin{equation}
    \pi \sim \operatorname{Dirichlet}(\eta \mathbf{1}_K),
    \qquad
    y_i \sim \operatorname{Categorical}(\pi),
    \label{eq:app_cluster_prior}
\end{equation}
where $\eta$ controls cluster imbalance.
Sampling $\eta$ from a broad range allows the prior to generate both balanced and highly imbalanced clustering tasks.

\subsection{Attribute Generation}

Given cluster assignments, node attributes are generated from cluster-dependent latent variables.
For each cluster $k$, we sample a latent centroid and scale:
\begin{equation}
    \mu_k \sim \mathcal{N}(0,I),
    \qquad
    \Sigma_k \sim p(\Sigma).
\end{equation}
For a node $i$ assigned to cluster $k$, we sample
\begin{equation}
    h_i^X \mid y_i=k
    \sim
    t_{\nu}(\mu_k,\Sigma_k),
    \label{eq:app_latent_attribute}
\end{equation}
where $t_\nu$ denotes a multivariate Student-$t$ distribution.
The heavy-tailed distribution introduces outliers and non-Gaussian feature patterns.

Observed attributes are generated through a task-specific random decoder:
\begin{equation}
    u_i = W h_i^X + b + \epsilon_i,
    \qquad
    x_i = T(u_i),
    \label{eq:app_feature_decoder}
\end{equation}
where $W$, $b$, and $\epsilon_i$ are sampled for each task, and $T$ is a randomly selected observation transform.
Depending on the transform, the observed attributes may be continuous, binary, count-valued, categorical, sparse, or mixed-type.
We additionally inject irrelevant dimensions, feature noise, random scaling, and missing or sparse entries to increase heterogeneity.

The strength of the attribute signal is controlled by an attribute separability variable.
By varying the between-cluster distance relative to within-cluster variation, the prior produces tasks where attributes are strongly informative, weakly informative, or nearly uninformative.

\subsection{Graph Generation}

Edges are generated conditionally on cluster assignments, degree propensities, latent structural positions, and optional attribute similarity.
For $i<j$, we sample
\begin{equation}
    A_{ij} \sim \operatorname{Bernoulli}(p_{ij}),
\end{equation}
where
\begin{align}
    \operatorname{logit}(p_{ij})
    =
    \rho
    + \alpha_{\theta}(\theta_i+\theta_j)
    + \lambda_B B_{y_i,y_j}
    + \lambda_H \kappa(h_i^A,h_j^A)
    + \lambda_X \operatorname{sim}(x_i,x_j).
    \label{eq:app_edge_emission}
\end{align}
Here, $\rho$ controls global sparsity, $\theta_i$ is a node-specific degree propensity, $B$ is a cluster-pair affinity matrix, $h_i^A$ is a latent structural vector, $\kappa(\cdot,\cdot)$ is a structural similarity function, and $\operatorname{sim}(\cdot,\cdot)$ is an attribute similarity function.
The coefficients $\lambda_B$, $\lambda_H$, and $\lambda_X$ control how strongly edges depend on cluster identity, latent structural similarity, and observed attribute similarity.

To model degree heterogeneity, we sample
\begin{equation}
    \theta_i
    \sim
    \mathcal{N}
    \left(
        -\frac{1}{2}\sigma_\theta^2,
        \sigma_\theta^2
    \right),
    \qquad
    \sigma_\theta \sim p(\sigma_\theta).
    \label{eq:app_degree_propensity}
\end{equation}
This allows the prior to generate graphs with approximately regular degrees as well as graphs with hubs and heavy-tailed degree distributions.

\subsection{Structural Regimes}
\label{app:structural_regimes}

The block affinity matrix $B\in\mathbb{R}^{K\times K}$ is constructed from a continuously sampled edge homophily level, providing smooth coverage of both homophilous and heterophilous connectivity without committing to discrete hand-crafted regimes.

\vpara{Homophily sampling.}
For each synthetic task we first draw
\begin{equation}
    h \sim \mathrm{Uniform}(0.1,\,0.9),
\end{equation}
where $h$ is the target edge homophily, defined as the expected fraction of edges connecting nodes of the same cluster.

\vpara{Affinity matrix construction.}
Given $h$, the block affinity matrix is constructed as
\begin{equation}
    B = R + \beta(h)\,I_K,
    \qquad
    \beta(h) = s\cdot\operatorname{logit}(h) = s\cdot\log\!\frac{h}{1-h},
    \label{eq:affinity_construction}
\end{equation}
where $R\in\mathbb{R}^{K\times K}$ is a symmetric random matrix with entries $R_{kl}=R_{lk}\overset{\mathrm{iid}}{\sim}\mathcal{N}(0,\sigma_R^2)$, providing heterogeneous off-diagonal affinities across cluster pairs, and $s>0$ is a global strength hyperparameter.
The diagonal shift $\beta(h)I_K$ continuously biases intra-cluster connectivity relative to inter-cluster connectivity:
\begin{itemize}[leftmargin=*,itemsep=2pt,topsep=2pt]
    \item $h>0.5$ ($\beta>0$): diagonal entries are boosted, yielding \emph{homophilous} graphs in which same-cluster edges are more probable.
    \item $h<0.5$ ($\beta<0$): diagonal entries are suppressed, yielding \emph{heterophilous} graphs in which cross-cluster edges are more probable.
    \item $h=0.5$ ($\beta=0$): the matrix reduces to the pure random baseline $R$ with no systematic intra/inter-cluster bias.
\end{itemize}

\vpara{Coverage and rationale.}
Because $\operatorname{logit}(h)$ is monotone and unbounded, small changes in $h$ near $0.5$ produce mild biases while values close to the boundary produce strong homophilous or heterophilous signals, naturally generating a broad and calibrated distribution over structural regimes.
This formulation prevents the model from assuming that connected nodes should always belong to the same cluster, and exposes \texttt{OFAG} during pre-training to the full range of real-world homophily levels observed in attributed graph benchmarks.

\subsection{Modality Reliability}

Finally, we explicitly vary the reliability of attributes and topology.
Some tasks are attribute-dominant, some are structure-dominant, some require combining both modalities, and some contain misleading or weakly informative graph structure.
This is implemented by varying the attribute separability, graph signal strength, edge noise, degree heterogeneity, and the coupling coefficients in \eqref{eq:app_edge_emission}.
Through this construction, \texttt{OFAG} is exposed during pre-training to a broad range of attributed graph clustering scenarios.

\begin{table*}[t]
\centering
\caption{Clustering performance in ARI and F1 on homophilous and heterophilous attributed graphs. The \bestcap{}, \secondcap{}, and \thirdcap{} results in each column are highlighted. For Avg. Rank, lower is better; for all other columns, higher is better.}
\label{tab:ari_f1}
\scriptsize
\setlength{\tabcolsep}{2.2pt}
\resizebox{\textwidth}{!}{%
\begin{tabular}{llcccccccccccc}
\toprule
\multirow{2}{*}{Model} & \multirow{2}{*}{Metric} & \multicolumn{5}{c}{Homophilous Graphs} & \multicolumn{5}{c}{Heterophilous Graphs} & \multirow{2}{*}{Mean} & \multirow{2}{*}{Avg. Rank} \\
\cmidrule(lr){3-7} \cmidrule(lr){8-12}
& & Cora & Photo & ArXiv & Reddit & Products & HM & Flickr & MAG & Pokec & WebTopic & & \\
\midrule
\multirow{2}{*}{KMeans} & ARI & \res{6.95}{2.61} & \res{20.01}{1.01} & \res{7.23}{0.15} & \res{2.93}{0.05} & \res{9.22}{0.33} & \res{2.96}{0.09} & \res{1.06}{0.15} & \res{2.77}{0.03} & \res{0.23}{0.00} & \res{0.87}{0.04} & 5.42 & 16.30 \\
    & F1 & \res{25.99}{2.91} & \res{42.68}{3.32} & \res{12.97}{0.18} & \res{7.25}{0.08} & \res{12.77}{0.41} & \res{12.50}{0.20} & \res{15.16}{0.85} & \res{5.71}{0.07} & \res{0.98}{0.01} & \res{4.64}{0.04} & 14.06 & 16.45 \\
\addlinespace[1pt]
\multirow{2}{*}{Node2Vec} & ARI & \res{33.61}{2.49} & \res{57.11}{1.39} & \res{16.69}{0.73} & \res{64.34}{1.64} & \res{18.53}{1.05} & \res{1.84}{0.07} & \res{1.52}{0.00} & \res{4.75}{0.07} & \second{\res{9.27}{0.08}} & \res{1.53}{0.03} & 20.92 & 10.90 \\
    & F1 & \res{61.66}{2.52} & \res{69.33}{1.18} & \res{20.96}{0.29} & \res{68.61}{2.12} & \third{\res{25.41}{0.24}} & \res{11.65}{0.46} & \res{19.68}{0.01} & \res{8.90}{0.09} & \second{\res{12.66}{0.15}} & \res{7.40}{0.16} & 30.63 & 8.70 \\
\addlinespace[1pt]
\midrule
\multirow{2}{*}{SSGC} & ARI & \res{41.25}{1.69} & \res{59.17}{3.81} & \best{\res{33.04}{0.29}} & \res{33.61}{1.29} & \res{18.84}{0.40} & \res{3.18}{0.02} & \res{3.29}{0.11} & \res{5.84}{0.09} & \res{0.42}{0.01} & \res{1.75}{0.04} & 20.04 & 8.45 \\
    & F1 & \res{65.16}{1.32} & \res{70.17}{3.78} & \second{\res{25.44}{0.59}} & \res{29.15}{1.33} & \res{24.01}{0.71} & \res{14.84}{0.13} & \res{18.64}{0.48} & \third{\res{9.78}{0.05}} & \res{1.60}{0.03} & \res{5.66}{0.04} & 26.45 & 8.50 \\
\addlinespace[1pt]
\multirow{2}{*}{SAGSC} & ARI & \res{38.83}{0.02} & \res{43.66}{0.01} & \res{20.69}{0.69} & \third{\res{65.93}{1.44}} & \res{19.36}{0.29} & \third{\res{4.12}{0.08}} & \res{3.27}{0.00} & \res{4.89}{0.11} & \best{\res{11.58}{0.18}} & \res{2.91}{0.10} & 21.52 & 8.00 \\
    & F1 & \res{61.03}{0.02} & \res{69.40}{0.01} & \third{\res{25.37}{0.85}} & \third{\res{70.43}{1.60}} & \second{\res{25.56}{0.53}} & \second{\res{16.08}{0.10}} & \res{20.97}{0.01} & \res{9.67}{0.20} & \best{\res{16.12}{0.26}} & \third{\res{9.00}{0.30}} & \second{32.36} & \third{4.90} \\
\addlinespace[1pt]
\multirow{2}{*}{MS2CAG} & ARI & \res{45.76}{0.98} & \best{\res{63.88}{0.54}} & \res{28.84}{0.18} & \res{56.56}{2.89} & \res{17.97}{0.65} & \res{3.50}{0.11} & \res{3.84}{0.03} & \res{5.16}{0.09} & \res{0.42}{0.03} & \res{2.36}{0.03} & 22.83 & 7.15 \\
    & F1 & \res{69.17}{0.65} & \second{\res{74.94}{0.95}} & \res{24.58}{0.50} & \res{55.37}{1.78} & \res{24.67}{0.31} & \res{12.95}{0.27} & \res{21.81}{0.03} & \second{\res{9.97}{0.14}} & \res{1.51}{0.02} & \res{8.52}{0.07} & 30.35 & 6.40 \\
\addlinespace[1pt]
\midrule
\multirow{2}{*}{DAEGC} & ARI & \res{38.36}{2.67} & \res{56.37}{0.02} & \res{17.53}{0.40} & \res{13.43}{1.24} & \res{2.35}{0.90} & \res{3.47}{0.07} & \res{4.61}{0.10} & \res{4.29}{0.36} & \res{0.61}{0.04} & \res{2.58}{0.38} & 14.36 & 10.85 \\
    & F1 & \res{64.54}{2.32} & \best{\res{76.26}{0.01}} & \res{20.64}{0.28} & \res{8.23}{0.93} & \res{8.02}{1.81} & \res{14.73}{0.05} & \res{19.24}{0.09} & \res{7.25}{0.43} & \res{1.73}{0.07} & \res{7.34}{0.19} & 22.80 & 10.90 \\
\addlinespace[1pt]
\multirow{2}{*}{DinkNet} & ARI & \res{52.45}{0.41} & \res{53.54}{0.23} & \res{14.93}{0.50} & \res{33.70}{1.24} & \res{13.67}{0.40} & \res{2.96}{0.03} & \res{1.56}{0.33} & \res{5.10}{0.04} & \res{0.29}{0.02} & \res{1.86}{0.25} & 18.01 & 11.85 \\
    & F1 & \res{70.12}{0.59} & \res{67.20}{0.27} & \res{16.94}{0.73} & \res{27.89}{0.94} & \res{14.14}{0.12} & \res{14.30}{0.26} & \res{16.50}{0.40} & \res{7.65}{0.11} & \res{1.65}{0.01} & \res{5.71}{0.10} & 24.21 & 12.40 \\
\addlinespace[1pt]
\multirow{2}{*}{MinCut} & ARI & \res{28.60}{2.86} & \res{50.99}{2.33} & \res{17.47}{0.98} & \res{25.42}{1.67} & \res{10.64}{0.32} & \res{1.22}{0.30} & \res{3.73}{0.66} & \res{5.56}{0.15} & \res{0.61}{0.04} & \res{2.00}{0.20} & 14.62 & 12.65 \\
    & F1 & \res{51.88}{6.97} & \res{66.99}{2.97} & \res{21.85}{0.73} & \res{26.39}{5.74} & \res{17.53}{0.54} & \res{9.54}{0.68} & \res{21.22}{1.17} & \res{8.04}{0.12} & \res{2.30}{0.05} & \res{6.92}{0.46} & 23.27 & 12.00 \\
\addlinespace[1pt]
\multirow{2}{*}{DMoN} & ARI & \res{33.35}{3.35} & \res{51.55}{2.41} & \res{15.44}{0.37} & \res{32.39}{1.40} & \res{10.93}{0.41} & \res{2.05}{0.28} & \res{5.01}{0.52} & \res{4.93}{0.02} & \res{1.17}{0.04} & \res{2.43}{0.21} & 15.93 & 11.30 \\
    & F1 & \res{49.57}{3.84} & \res{70.86}{2.44} & \res{20.37}{0.61} & \res{26.52}{0.76} & \res{16.43}{1.13} & \res{10.58}{0.44} & \third{\res{22.76}{0.48}} & \res{8.97}{0.14} & \res{3.04}{0.04} & \res{7.80}{0.21} & 23.69 & 9.95 \\
\addlinespace[1pt]
\multirow{2}{*}{Neuromap} & ARI & \res{36.69}{3.25} & \res{53.01}{3.67} & \res{21.92}{1.08} & \res{25.32}{0.26} & \res{16.52}{1.56} & \res{1.45}{0.07} & \res{4.99}{1.28} & \best{\res{9.85}{0.36}} & \third{\res{1.84}{0.11}} & \third{\res{3.31}{0.56}} & 17.49 & 8.50 \\
    & F1 & \res{54.69}{3.95} & \res{69.13}{3.76} & \res{21.29}{0.84} & \res{25.70}{0.43} & \res{13.09}{1.04} & \res{9.37}{0.18} & \res{22.39}{1.57} & \res{6.20}{0.19} & \res{2.62}{0.14} & \res{7.96}{0.39} & 23.24 & 11.80 \\
\addlinespace[1pt]
\midrule
\multirow{2}{*}{GAE} & ARI & \res{45.63}{0.15} & \res{52.39}{0.12} & \res{15.95}{0.40} & \res{27.88}{0.46} & \res{14.49}{0.25} & \second{\res{4.21}{0.23}} & \second{\res{5.53}{0.30}} & \res{5.34}{0.08} & \res{0.34}{0.01} & \res{1.98}{0.06} & 17.37 & 9.90 \\
    & F1 & \res{64.48}{0.11} & \res{67.52}{0.09} & \res{18.57}{0.22} & \res{26.89}{0.50} & \res{18.35}{0.47} & \third{\res{15.20}{0.22}} & \res{18.83}{0.13} & \res{8.97}{0.09} & \res{1.53}{0.02} & \res{5.81}{0.14} & 24.61 & 11.25 \\
\addlinespace[1pt]
\multirow{2}{*}{DGI} & ARI & \res{52.03}{3.22} & \res{58.38}{0.46} & \res{22.77}{0.28} & \res{59.28}{0.54} & \res{14.67}{0.21} & \res{3.11}{0.06} & \res{4.98}{0.01} & \res{5.55}{0.07} & \res{0.32}{0.00} & \res{2.18}{0.06} & 22.33 & 8.50 \\
    & F1 & \res{70.07}{2.56} & \res{72.77}{1.71} & \res{21.47}{0.30} & \res{51.97}{0.79} & \res{17.87}{0.32} & \res{14.20}{0.23} & \res{19.92}{0.15} & \res{8.56}{0.05} & \res{1.59}{0.01} & \res{7.10}{0.06} & 28.55 & 9.25 \\
\addlinespace[1pt]
\multirow{2}{*}{CCASSG} & ARI & \res{51.84}{3.32} & \res{51.80}{4.28} & \res{19.18}{0.18} & \res{31.28}{0.32} & \third{\res{22.43}{0.70}} & \res{3.20}{0.05} & \res{3.34}{0.03} & \res{6.11}{0.05} & \res{0.23}{0.00} & \res{1.40}{0.29} & 19.08 & 10.15 \\
    & F1 & \third{\res{72.29}{1.48}} & \res{69.69}{1.95} & \res{23.78}{0.25} & \res{30.09}{0.32} & \res{22.83}{0.39} & \res{14.29}{0.24} & \res{16.67}{0.26} & \res{8.56}{0.06} & \res{0.98}{0.01} & \res{5.17}{0.16} & 26.44 & 10.50 \\
\addlinespace[1pt]
\multirow{2}{*}{S3GC} & ARI & \res{48.12}{2.56} & \res{59.09}{2.86} & \res{27.08}{0.55} & \second{\res{83.27}{2.36}} & \second{\res{23.57}{0.28}} & \res{3.30}{0.05} & \third{\res{5.45}{0.00}} & \second{\res{7.51}{0.11}} & \res{0.67}{0.01} & \res{2.56}{0.04} & \second{26.06} & \second{4.70} \\
    & F1 & \res{69.04}{1.57} & \res{70.21}{2.55} & \res{24.09}{0.70} & \second{\res{71.35}{1.60}} & \res{25.19}{0.41} & \res{14.59}{0.29} & \second{\res{23.21}{0.00}} & \res{7.91}{0.05} & \res{2.02}{0.01} & \res{8.07}{0.02} & \third{31.57} & 6.50 \\
\addlinespace[1pt]
\multirow{2}{*}{MAGI} & ARI & \second{\res{53.51}{1.09}} & \res{58.53}{0.32} & \third{\res{30.38}{1.33}} & \res{62.27}{1.24} & \res{16.67}{0.37} & \res{2.46}{0.27} & \res{3.16}{0.27} & \res{6.44}{0.06} & \res{1.06}{0.01} & \res{2.92}{0.29} & \third{23.74} & \third{6.60} \\
    & F1 & \second{\res{72.65}{0.36}} & \res{70.82}{0.15} & \res{25.17}{0.59} & \res{55.26}{1.62} & \res{21.12}{0.19} & \res{14.33}{0.78} & \res{20.81}{0.36} & \res{9.46}{0.06} & \res{2.90}{0.02} & \best{\res{9.53}{0.27}} & 30.20 & 5.40 \\
\addlinespace[1pt]
\multirow{2}{*}{NS4GC} & ARI & \best{\res{56.63}{1.20}} & \second{\res{62.64}{0.62}} & \second{\res{31.84}{2.30}} & \res{39.12}{0.19} & \best{\res{23.82}{0.23}} & \best{\res{4.72}{0.10}} & \res{3.37}{0.01} & \third{\res{6.85}{0.06}} & \res{0.74}{0.01} & \best{\res{3.86}{0.05}} & 23.36 & \best{3.60} \\
    & F1 & \best{\res{72.81}{0.38}} & \third{\res{73.73}{1.11}} & \best{\res{26.67}{0.57}} & \res{34.95}{0.63} & \best{\res{26.52}{0.28}} & \best{\res{17.23}{0.28}} & \res{19.44}{0.03} & \res{9.14}{0.08} & \res{2.33}{0.02} & \res{8.80}{0.15} & 29.16 & \second{4.30} \\
\addlinespace[1pt]
\midrule
\multirow{2}{*}{\texttt{OFAG}} & ARI & \third{\res{53.23}{0.00}} & \third{\res{61.92}{0.00}} & \res{25.22}{0.29} & \best{\res{88.40}{0.00}} & \res{19.87}{0.40} & \res{3.21}{0.05} & \best{\res{5.96}{0.00}} & \res{6.33}{0.11} & \res{1.43}{0.01} & \second{\res{3.84}{0.00}} & \best{26.94} & \best{3.60} \\
    & F1 & \res{71.72}{0.00} & \res{72.93}{0.00} & \res{24.38}{0.38} & \best{\res{81.17}{0.00}} & \res{25.02}{0.53} & \res{13.54}{0.14} & \best{\res{23.89}{0.00}} & \best{\res{10.28}{0.04}} & \third{\res{3.12}{0.02}} & \second{\res{9.34}{0.00}} & \best{33.54} & \best{3.80} \\
\bottomrule
\end{tabular}%
}
\end{table*}

\section{Proof of Theorem~\ref{thm:synthetic_real}}
\label{app:thm_synthetic_real}

\begin{proof}
We establish the approximation separately for condition (i) (node features) and condition (ii) (edges), then combine.

\textbf{Condition (i): node feature approximation.}
Let $p_{\mathrm{real}}(X\mid y)=\sum_{k=1}^{K}\pi_k p_k(X)$ with $y$-dependent components $p_k$.
By the universal approximation theorem, for each $k$ there exists a neural network $\Phi_k$ transforming a Gaussian source into an approximation of $p_k$ with arbitrarily small total variation error.
A unified $\Phi$ can implement all $\Phi_k$ conditioned on $k$, approximating the full mixture.
In the CAGM prior, cluster-conditioned latent variables are drawn from $t_\nu(\mu_k,\Sigma_k)$ and passed through a task-specific random decoder $T$ (Appendix~\ref{app:prior_details}).
Over a sufficiently rich family of decoders (e.g., randomized MLP transformations), $T$ implements $\Phi_k$ for each $k$, and the Dirichlet-sampled proportions $\pi$ cover arbitrary mixing weights.

\textbf{Condition (ii): edge distribution approximation.}
Any pairwise edge probability model $p(A_{ij}=1\mid y,X)=\sigma(s_{ij})$ is determined by scores $s_{ij}$ depending on cluster membership and/or node features.
The CAGM edge emission~\eqref{eq:app_edge_emission} parameterizes
\begin{equation}
    \operatorname{logit}(p_{ij})
    = \rho + \alpha_\theta(\theta_i+\theta_j) + \lambda_B B_{y_i,y_j} + \lambda_H\kappa(h_i^A,h_j^A) + \lambda_X\operatorname{sim}(x_i,x_j).
\end{equation}
The block affinity matrix $B\in\mathbb{R}^{K\times K}$ captures arbitrary cluster-conditioned connectivity, spanning homophilous to heterophilous regimes via the construction in Appendix~\ref{app:structural_regimes}.
The degree propensities $\theta_i$ model heterogeneous node degrees, and the feature-similarity term $\lambda_X\operatorname{sim}(x_i,x_j)$ encodes dependence of edges on observed attributes.
By setting these coefficients appropriately, any target $p(A\mid y,X)$ satisfying condition (ii) can be approximated.

\textbf{Combining.}
For any $\varepsilon>0$, set approximation errors in each part to at most $\varepsilon/2$.
By the triangle inequality for total variation, $\mathrm{TV}(\Phi_\# p_{\mathrm{syn}}(\mathcal{G}),\,p_{\mathrm{real}}(\mathcal{G}))\leq\varepsilon$.
\end{proof}

\section{Proof of Proposition~\ref{prop:swifrt}}
\label{app:prop_swifrt}

\begin{proof}
\textbf{(i) Parameter independence from $F$.}
The shared encoder $g_\theta$ consists of: a scalar value embedding $\phi_{\mathrm{val}}:\mathbb{R}\to\mathbb{R}^d$; learnable filter-index embeddings $\{p_\ell\}_{\ell=0}^L\subset\mathbb{R}^d$; a Transformer encoder operating over a sequence of length $L+2$; and a scalar decoder $\phi_{\mathrm{out}}:\mathbb{R}^d\to\mathbb{R}$.
None of these modules depend on $F$; the total parameter count is
$|\phi_{\mathrm{val}}|+(L+1)d+|\text{Transformer}|+|\phi_{\mathrm{out}}|,$
which is fixed regardless of the feature dimension $F$ of the input graph.
Although $g_\theta$ is applied $N\times F$ times in total during a forward pass, the parameter count does not change.

\textbf{(ii) Feature-permutation equivariance.}
Let $\sigma$ be a permutation of $\{1,\ldots,F\}$ with $(X_\sigma)_{:,f}=X_{:,\sigma(f)}$.
For each node $i$ and feature index $f$, the filter response profile under the permuted input satisfies
\begin{equation}
\begin{aligned}
    r_{i,f}^{(\sigma)}
    &= \bigl[(\Gamma_0 X_\sigma)_{i,f},\,\ldots,\,
             (\Gamma_L X_\sigma)_{i,f}\bigr] \\
    &= \bigl[(\Gamma_0 X)_{i,\sigma(f)},\,\ldots,\,
             (\Gamma_L X)_{i,\sigma(f)}\bigr]
     = r_{i,\sigma(f)},
\end{aligned}
\end{equation}
where the second equality holds because the graph filters $\Gamma_\ell(A)=\hat{A}^\ell$ act only on the node dimension and are independent of feature indices.
Since $g_\theta$ is applied independently to each $(i,f)$ pair without any cross-channel interaction, the re-encoded value for the permuted input is
$Z_{i,f}^{(\sigma)}=g_\theta(r_{i,f}^{(\sigma)})=g_\theta(r_{i,\sigma(f)})=Z_{i,\sigma(f)}$.
Therefore $\operatorname{SwiFRT}_\theta(X_\sigma,A)=\operatorname{SwiFRT}_\theta(X,A)_\sigma$.
\end{proof}

\section{Proof of Proposition~\ref{prop:vmf_mle}}
\label{app:prop_vmf}

\begin{proof}
Let $z_i\in S^{d-1}$ be the $\ell_2$-normalized embedding of node $i$, and let $\mu_k\in S^{d-1}$ be the prototype for cluster $k\in\{1,\ldots,K\}$.

\textbf{Von Mises--Fisher mixture model.}
Assume that embeddings within cluster $k$ are drawn from a von Mises--Fisher (vMF) distribution:
\begin{equation}
    p(z_i\mid y_i=k;\mu_k,\kappa) = Z_d(\kappa)\exp(\kappa\mu_k^\top z_i),
    \label{eq:vmf_density_app}
\end{equation}
where $Z_d(\kappa)=\kappa^{d/2-1}/\bigl((2\pi)^{d/2}I_{d/2-1}(\kappa)\bigr)$ is the normalization constant with $I_\nu$ denoting the modified Bessel function of the first kind of order $\nu$.
Note that $Z_d(\kappa)$ depends only on $\kappa$ and $d$, not on $\mu_k$.

\textbf{Posterior probability.}
Assuming equal mixing weights $\pi_k=1/K$, the posterior probability of cluster $k$ given $z_i$ is:
\begin{align}
    p(y_i=k\mid z_i;\{\mu_k\},\kappa)
    &= \frac{\pi_k\,Z_d(\kappa)\exp(\kappa\mu_k^\top z_i)}{\sum_{k'=1}^K \pi_{k'}\,Z_d(\kappa)\exp(\kappa\mu_{k'}^\top z_i)} \notag \\
    &= \frac{\exp(\mu_k^\top z_i/\tau)}{\sum_{k'=1}^K\exp(\mu_{k'}^\top z_i/\tau)},
    \label{eq:vmf_post_app}
\end{align}
where $\tau=1/\kappa$ and $Z_d(\kappa)$ cancels identically in numerator and denominator (since $\pi_k=1/K$ for all $k$).
This is precisely the form given in Eq.~\eqref{eq:vmf_posterior}.

\textbf{Connection to $\mathcal{L}_{\mathrm{comp}}$.}
The complete-data negative log-likelihood under the vMF mixture is:
\begin{equation}
\begin{aligned}
    &-\frac{1}{N}\sum_{i=1}^N\log p(y_i\mid z_i;\{\mu_k\},\kappa)\\
    &= -\frac{1}{N}\sum_{i=1}^N\log\frac{\exp(\mu_{y_i}^\top z_i/\tau)}{\sum_{k'=1}^K\exp(\mu_{k'}^\top z_i/\tau)}
    = \mathcal{L}_{\mathrm{comp}}.
\end{aligned}
\end{equation}
Therefore, minimizing $\mathcal{L}_{\mathrm{comp}}$ with respect to the encoder parameters $\theta$ (which determine $\{z_i\}$ through the forward pass) is equivalent to maximizing the complete-data log-likelihood under the $K$-component vMF mixture, i.e., to MLE on $\{\mu_k\}$ with concentration $\kappa=1/\tau$.

\begin{remark}
The cancellation of $Z_d(\kappa)$ in Eq.~\eqref{eq:vmf_post_app} is a distinctive property of the vMF distribution: its normalization constant does not depend on the mean direction $\mu_k$. This property does not hold for Gaussian distributions with class-specific covariances, where the log-determinant $\log|\Sigma_k|$ would remain in the objective and introduce additional terms.
\end{remark}
\end{proof}

\section{Proof of Proposition~\ref{prop:ofag_pfn}}
\label{app:prop_pfn}

\begin{proof}
The Prior-Data Negative Log-Likelihood (Prior-Data NLL) for a PFN is defined as~\citep{PFN, PFNPosition}:
\begin{equation}
    \mathcal{L}_{\mathrm{PFN}}(\theta)
    = \mathbb{E}_{\mathcal{D}_{\mathrm{c}}\cup\{(x,y)\}\sim p(\mathcal{D})}
    \bigl[-\log q_\theta(y\mid x,\mathcal{D}_{\mathrm{c}})\bigr].
    \label{eq:pfn_nll_app}
\end{equation}

\textbf{Identifying context and query.}
Since no cluster labels are available at inference time, the PFN context is $\mathcal{D}_{\mathrm{c}}=\emptyset$.
The query set is taken to be all nodes in a synthetic graph $(A,X,y)\sim p(\mathcal{D})$.
Each node $i$ corresponds to a query point $x=z_i$ with label $y=y_i$, where $z_i=Z_{i,:}/\|Z_{i,:}\|_2$ is the $\ell_2$-normalized output of SwiFRT.

\textbf{Identifying the predictive distribution.}
The model's predictive distribution is identified as the vMF posterior derived in Appendix~\ref{app:prop_vmf}:
\begin{equation}
\begin{aligned}
    q_\theta(y_i\mid z_i,\emptyset)
    &= p(y_i\mid z_i;\{\hat{\mu}_k\},\tau) \\
    &= \frac{\exp(\hat{\mu}_{y_i}^\top z_i/\tau)}
            {\sum_{k=1}^K \exp(\hat{\mu}_k^\top z_i/\tau)},
\end{aligned}
\end{equation}
where the prototypes $\{\hat{\mu}_k\}$ are computed on-the-fly from the current embeddings and synthetic labels via Eq.~\eqref{eq:prototype}.

\textbf{Connecting $\mathcal{L}_{\mathrm{PFN}}$ to $\mathcal{L}_{\mathrm{comp}}$.}
Substituting into Eq.~\eqref{eq:pfn_nll_app} with $\mathcal{D}_{\mathrm{c}}=\emptyset$ and averaging over all query nodes:
\begin{equation}
\begin{aligned}
    \mathcal{L}_{\mathrm{PFN}}(\theta)
    &= \mathbb{E}_{(A,X,y)\sim p(\mathcal{D})}
    \left[-\frac{1}{N}\sum_{i=1}^N\log q_\theta(y_i\mid z_i)\right]\\
    &= \mathbb{E}_{(A,X,y)\sim p(\mathcal{D})}\bigl[\mathcal{L}_{\mathrm{comp}}(\theta;A,X,y)\bigr].
\end{aligned}
\end{equation}

\textbf{Consequence for posterior approximation.}
By the equivalence theorem of~\citet{PFN}, minimizing the Prior-Data NLL is equivalent to minimizing the expected Kullback--Leibler divergence
\begin{equation}
    \mathbb{E}_{(A,X,y)\sim p(\mathcal{D})}
    \Bigl[D_{\mathrm{KL}}\!\bigl(p(\cdot\mid z,\emptyset)\,\big\|\,q_\theta(\cdot\mid z,\emptyset)\bigr)\Bigr]
\end{equation}
between $q_\theta$ and the true posterior predictive distribution $p(\cdot\mid z,\emptyset)$ induced by $p(\mathcal{D})$.
Therefore, a sufficiently expressive \texttt{OFAG} trained to convergence implicitly performs Bayesian model averaging over the space of vMF mixture clustering hypotheses consistent with $p(\mathcal{D})$, without ever computing a posterior explicitly.
\end{proof}

\begin{table*}[t]
    \centering
    \caption{Summary of the benchmark datasets.
    The collection spans multiple graph scales, domains, and feature modalities.
    $\mathcal{H}_e$ and $\mathcal{H}_n$ denote edge homophily and node homophily, respectively.
    }
    \label{tab:datasets}
    \resizebox{\textwidth}{!}{
    \begin{tabular}{lllrrrrcccc}
        \toprule
        \textbf{Scale} & \textbf{Dataset} & \textbf{Domain} 
        & \textbf{\#Nodes} & \textbf{\#Edges} & \textbf{Avg. Deg.} & \textbf{\#Feat.}
        & \textbf{Feat. Type} & \textbf{\#Clus.} & \textbf{$\mathcal{H}_e$} & \textbf{$\mathcal{H}_n$} \\
        \midrule
        \multirow{2}{*}{Tiny}
          & Cora & Citation & 2,708 & 10,556 & 3.9 & 1,433 & Textual & 7 & 0.81 & 0.83 \\
          & Photo & Co-purchase & 7,650 & 238,162 & 31.1 & 745 & Textual & 8 & 0.83 & 0.84 \\
        \midrule
        \multirow{2}{*}{Small}
          & HM & Co-purchase & 46,563 & 21,461,990 & 460.9 & 120 & Tabular & 21 & 0.16 & 0.35 \\
          & Flickr & Social & 89,250 & 899,756 & 10.1 & 500 & Textual & 7 & 0.32 & 0.32 \\
        \midrule
        \multirow{3}{*}{Medium}
          & ArXiv & Citation & 169,343 & 1,166,243 & 6.9 & 128 & Textual & 40 & 0.65 & 0.64 \\
          & Reddit & Social & 232,965 & 23,213,838 & 99.6 & 602 & Textual & 41 & 0.78 & 0.81 \\
          & MAG & Citation & 736,389 & 10,792,672 & 14.7 & 128 & Textual & 349 & 0.30 & 0.31 \\
        \midrule
        \multirow{3}{*}{Large}
          & Pokec & Social & 1,632,803 & 44,603,928 & 27.3 & 56 & Tabular & 183 & 0.43 & 0.39 \\
          & Products & Co-purchase & 2,449,029 & 61,859,140 & 25.4 & 100 & Textual & 47 & 0.81 & 0.82 \\
          & WebTopic & Web & 2,890,331 & 24,754,822 & 8.6 & 528 & Tabular & 28 & 0.22 & 0.18 \\
        \bottomrule
    \end{tabular}
    }
\end{table*}

\section{Dataset Details}
\label{app:datasets}

Table~\ref{tab:datasets} summarizes the benchmark datasets used in our experiments.
The datasets span multiple scales, domains, feature types, and homophily regimes.
We use ground-truth labels only for evaluation and do not use them during inference.

\subsection{Tiny Scale Datasets}

\noindent\textbf{Cora}~\citep{GCN}.
Cora is a citation network where nodes represent scientific papers and edges represent citation links.
Node attributes are sparse bag-of-words vectors extracted from paper contents.
The clustering task is to group papers into seven research topics.

\noindent\textbf{Photo}~\citep{Pitfall}.
Photo is an Amazon co-purchase graph.
Nodes represent products, and edges indicate that two products are frequently purchased together.
Node features are bag-of-words representations extracted from product reviews.
The clustering objective is to partition products into eight product categories.

\subsection{Small Scale Datasets}

\noindent\textbf{HM}~\citep{GraphLand}.
HM is derived from the H\&M Kaggle competition.
The graph represents a product co-purchasing network, where nodes are products and edges connect products frequently bought by the same customers.
Unlike most textual benchmarks, HM contains tabular product features, including categorical product metadata and numerical purchasing statistics.
The task is to partition products into 21 product categories.

\noindent\textbf{Flickr}~\citep{GraphSAINT}.
Flickr is an image-sharing social network.
Nodes correspond to images, and edges are constructed based on shared metadata such as location, gallery, or user information.
Node features are 500-dimensional bag-of-words vectors based on user-provided tags.
The clustering task is to group images into seven classes.

\subsection{Medium Scale Datasets}

\noindent\textbf{ArXiv}~\citep{OGB}.
ArXiv is a citation network of Computer Science papers from arXiv.
Nodes represent papers and edges represent citation links.
Node features are 128-dimensional vectors obtained from paper titles and abstracts.
The clustering task is to assign papers to 40 subject areas.

\noindent\textbf{Reddit}~\citep{SAGE}.
Reddit is a social graph constructed from Reddit posts.
Nodes represent posts, and edges connect posts based on common users commenting on both posts.
Node features are text-based representations derived from post-related content.
The clustering target is the subreddit community associated with each post.

\noindent\textbf{MAG}~\citep{OGB}.
MAG is a citation network extracted from the Microsoft Academic Graph.
Following the benchmark setting, we use the homogeneous paper citation graph, where nodes represent papers and edges represent citation relations.
Node features are 128-dimensional textual embeddings derived from paper abstracts.
The clustering task is to group papers into 349 venue categories, making MAG a challenging medium-scale benchmark with a large number of clusters and relatively low homophily.

\subsection{Large Scale Datasets}

\noindent\textbf{Pokec}~\citep{GraphLand}.
Pokec is a social network dataset where nodes represent users and edges represent friendship relations.
Node features are tabular attributes derived from user profile information.
The clustering task is to predict users' regions, resulting in a challenging many-cluster setting with 183 classes.

\noindent\textbf{Products}~\citep{OGB}.
Products is a large-scale Amazon co-purchase network.
Nodes represent products, and edges connect products that are purchased together.
Node features are generated from product descriptions.
The clustering task is to group products into 47 top-level categories.

\noindent\textbf{WebTopic}~\citep{GraphLand}.
WebTopic is a web graph obtained from the Yandex search engine.
Nodes correspond to websites, and directed edges indicate that users followed links from one website to another.
The features are tabular and heterogeneous, including numerical and categorical website attributes.
The clustering goal is to partition websites into 28 topics.
This dataset has low homophily, making it a challenging benchmark for methods that rely heavily on edge-based smoothing.

\section{Baseline Details}
\label{app:baselines}

We follow PyAGC~\citep{PyAGC} and compare \texttt{OFAG} with representative attributed graph clustering methods.
For clarity, we group the baselines into three categories.

\subsection{Traditional and Non-Parametric Methods}

\begin{itemize}[leftmargin=*]
    \item \textbf{KMeans}~\citep{KMeans} clusters the raw node features directly and ignores graph topology. It serves as a basic feature-only baseline.

    \item \textbf{Node2Vec}~\citep{Node2Vec} learns structural node embeddings via biased random walks and then applies $k$-means. It serves as a structure-only baseline.

    \item \textbf{SSGC}~\citep{SSGC} uses simplified spectral graph convolution to propagate features over the graph before clustering. It is a scalable non-parametric encoder followed by $k$-means.

    \item \textbf{SAGSC}~\citep{SAGSC} combines graph smoothing with subspace clustering. It first smooths attributes over the graph and then learns an affinity matrix for spectral clustering.

    \item \textbf{MS2CAG}~\citep{MS2CAG} improves scalable subspace graph clustering by avoiding explicit construction of the full self-expressive matrix and solving a rank-constrained formulation efficiently.
\end{itemize}

\subsection{Deep Decoupled Methods}

\begin{itemize}[leftmargin=*]
    \item \textbf{GAE}~\citep{GAE} learns node embeddings with a graph autoencoder by reconstructing graph structure, followed by clustering on the learned embeddings.

    \item \textbf{DGI}~\citep{DGI} maximizes mutual information between local node representations and a global graph summary through contrastive learning.

    \item \textbf{CCASSG}~\citep{CCASSG} learns graph representations through augmentation-based invariance and feature decorrelation, without requiring explicit negative samples.

    \item \textbf{S3GC}~\citep{S3GC} uses a contrastive objective designed to improve clusterability, with positive relations constructed from graph neighborhoods.

    \item \textbf{NS4GC}~\citep{NS4GC} introduces semantic-aware reliable node similarity to guide representation learning for graph clustering.

    \item \textbf{MAGI}~\citep{MAGI} incorporates modularity maximization into graph contrastive learning, aligning the representation learning objective with community structure.
\end{itemize}

\subsection{Deep Joint Methods}

\begin{itemize}[leftmargin=*]
    \item \textbf{DAEGC}~\citep{DAEGC} uses a graph attention autoencoder and jointly optimizes reconstruction and clustering objectives.

    \item \textbf{DinkNet}~\citep{DinkNet} learns cluster-aware node representations with dilation and shrink losses for scalable graph clustering.

    \item \textbf{MinCut}~\citep{MinCut} optimizes a differentiable relaxation of the normalized min-cut objective with additional regularization to avoid degenerate assignments.

    \item \textbf{DMoN}~\citep{DMoN} optimizes a differentiable modularity objective for graph clustering.

    \item \textbf{Neuromap}~\citep{Neuromap} formulates graph clustering through a differentiable version of the map equation, encouraging partitions that compress graph flow patterns.
\end{itemize}

\begin{table}[h]
\centering
\caption{Key hyperparameters of the CAGM synthetic prior.}
\label{tab:cagm_config}
\begin{adjustbox}{width=0.5\textwidth, center}
\begin{tabular}{llc}
\toprule
\textbf{Category} & \textbf{Parameter} & \textbf{Range / Value} \\
\midrule
\multirow{3}{*}{Episode size}
  & Nodes $N$         & $[512,\;1024]$ \\
  & Feature dim $F$   & $[8,\;128]$    \\
  & Clusters $K$      & $[3,\;10]$     \\
\midrule
\multirow{2}{*}{Latent dims}
  & Attribute latent $d_x$ & $[4,\;32]$ \\
  & Structural latent $d_a$ & $[4,\;32]$ \\
\midrule
\multirow{2}{*}{Cluster proportions}
  & Dirichlet concentration $\eta$ & $[0.5,\;10.0]$ (log-uniform) \\
  & Min.\ cluster size             & $1$ \\
\midrule
\multirow{2}{*}{Graph structure}
  & Edge homophily $h$   & $[0.1,\;0.9]$ (uniform) \\
  & Target avg.\ degree  & $[4,\;64]$ (log-uniform) \\
\midrule
Attributes
  & Max.\ irrelevant feature fraction & $5\%$ \\
\bottomrule
\end{tabular}
\end{adjustbox}
\end{table}

\section{Implementation and Evaluation Details}
\label{app:implementation}

\vpara{OFAG pre-training.}
We pre-train \texttt{OFAG} on synthetic attributed graphs sampled from the proposed prior.
At each step, we sample 8 graphs as a mini-batch.
Training lasts for 10,000 steps, so the model observes 80,000 synthetic graphs in total.
We optimize the hyperspherical clustering objective using AdamW with learning rate $1\times 10^{-4}$ and weight decay $1\times 10^{-4}$.
The SwiFRT hidden dimension is set to 8, and the maximum filter order is set to $L=10$.
The temperature is fixed to $\tau=1.0$, and the dispersion weight is fixed to $\lambda=0.1$.
To monitor the pre-training process, we synthesize a fixed set of 16 synthetic graphs as a validation set before training begins.
At the end of each training step, we compute the loss on this fixed validation set and retain the checkpoint that achieves the lowest validation loss.
Pre-training takes approximately 1.5 hours on a single NVIDIA A800 40G GPU.

\vpara{CAGM prior configuration.}
Table~\ref{tab:cagm_config} summarizes the key hyperparameters of the CAGM generator used during pre-training.
Graph size ($N$), feature dimension ($F$), and number of clusters ($K$) are sampled uniformly from their respective ranges.
Cluster proportions are drawn from a Dirichlet distribution with concentration $\eta$ sampled log-uniformly from $[0.5, 10.0]$: small $\eta$ produces highly imbalanced clusters while large $\eta$ approaches uniform proportions.
Observed node attributes are decoded from Student-$t$ latent vectors through task-specific random linear decoders, and the output dimensions are randomly allocated to five feature types (continuous, binary, count-valued, categorical one-hot, and sparse bag-of-words) with their proportions drawn from a Dirichlet distribution per episode.
At most 5\% of dimensions may be replaced by cluster-irrelevant nuisance features.
The target average degree is sampled log-uniformly from $[4, 64]$, and a binary-search procedure calibrates the global sparsity intercept $\rho$ so that the expected graph density matches the target.
Edge homophily $h$ is sampled uniformly from $[0.1, 0.9]$; the block affinity matrix diagonal is shifted by $s \cdot \mathrm{logit}(h)$ with strength $s \sim \mathrm{Uniform}(0.5, 2.0)$.
The coupling coefficients $\lambda_B \sim \mathrm{Uniform}(0.4, 2.5)$, $\lambda_H$, and $\lambda_X$ control how strongly edges depend on cluster identity, latent structural similarity, and observed attribute similarity, respectively; $\lambda_H$ and $\lambda_X$ are zeroed out with probability 0.25 and 0.35 to produce episodes where graph structure or attributes provide no cross-graph signal.

\vpara{Zero-shot inference.}
After pre-training, \texttt{OFAG} is frozen and directly applied to each benchmark graph.
For a graph $(A,X)$, the model computes normalized node representations in one forward pass.
We then run $k$-means with the ground-truth number of clusters $K$.
No benchmark labels are used for training, fine-tuning, early stopping, hyperparameter selection, or model selection.
The only use of labels is for computing evaluation metrics.

\begin{figure*}[t]
\centering
\subfigure[Homophilous graphs]{\includegraphics[width=0.32\linewidth]{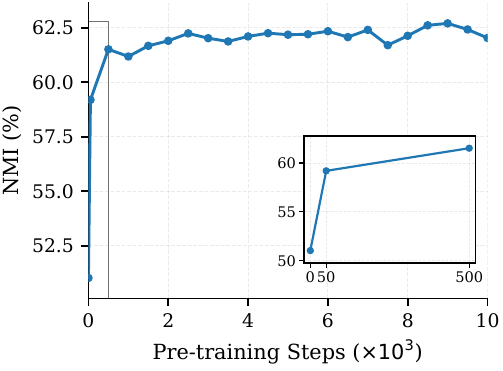}}
\subfigure[Heterophilous graphs]{\includegraphics[width=0.32\linewidth]{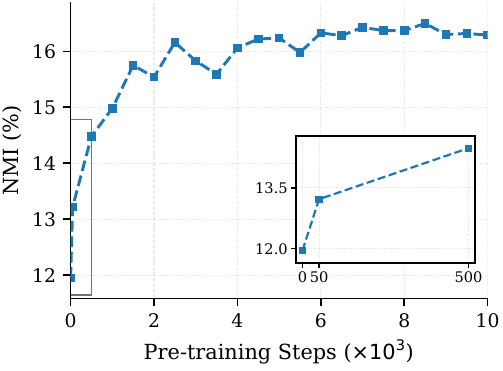}}
\subfigure[All graphs]{\includegraphics[width=0.32\linewidth]{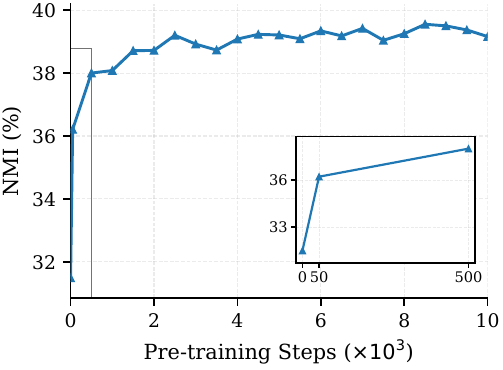}}
\caption{
    NMI of \texttt{OFAG} at different pre-training checkpoints on
    homophilous, heterophilous, and all benchmark graphs.
    Performance improves substantially during the early stage of pre-training
    and subsequently stabilizes, indicating that \texttt{OFAG} rapidly learns
    a transferable clustering inference strategy.
    The inset plots enlarge the early checkpoints to highlight the improvement
    from initialization to the first evaluated pre-training steps.
}
\label{fig:pt_realistic_nmi}
\end{figure*}

\section{Additional Experiments}

\subsection{Pre-training Dynamics on Realistic Graphs}
\label{app:pretrain_dynamics}

Figure~\ref{fig:pt_realistic_nmi} tracks the NMI of \texttt{OFAG} checkpoints when applied directly to the five homophilous datasets, the five heterophilous datasets, and all ten graphs.

\vpara{Two-phase learning.}
All three curves follow a clear two-phase pattern that mirrors the synthetic validation dynamics reported in Section~\ref{sec:pretrain_dynamics}.
NMI improves steeply during the early stage of training, where the gap between the initial checkpoint and those evaluated after the first few hundred steps is noticeably large, and then enters a gradual plateau for the remainder of training.
This indicates that the model rapidly acquires a transferable clustering strategy from the CAGM prior before refining it incrementally.

\vpara{Homophilous vs.\ heterophilous graphs.}
Homophilous graphs consistently yield higher absolute NMI throughout training, reflecting that their smoother attribute-structure correlation is closer to typical prior conditions.
Heterophilous graphs start from a lower baseline but also improve continuously, demonstrating that the prior's diverse homophily spectrum endows the model with non-trivial generalization even to graphs where edge propagation is uninformative.
The relative improvement rate across the two groups is broadly similar, suggesting that the pre-training signal benefits both graph types proportionally.

\vpara{Checkpoint selection.}
Although the NMI trajectories stabilize after the initial rapid gains, they do not settle to a perfectly flat line: minor fluctuations persist across later checkpoints, and the overall best performance on the benchmarks does not always coincide with the final training step.
This motivates our checkpoint selection strategy (i.e., retaining the snapshot with the lowest loss on the fixed synthetic validation set) rather than simply using the last checkpoint.
The validation-loss-based criterion provides a lightweight, label-free proxy that is well-aligned with downstream clustering quality across both graph types.

\subsection{Sensitivity to Filter Order}
\label{app:ablation_L}

SwiFRT builds each node-feature response profile by collecting diffusion outputs at orders $0,\ldots,L$; the choice of $L$ determines the multi-scale context available to the Transformer fusion.
Table~\ref{tab:ablation_L} reports average NMI/ACC as $L$ varies from 2 to 12, with all other settings held fixed.

\begin{table}[h]
\centering
\caption{Sensitivity to filter order $L$ in SwiFRT.}
\label{tab:ablation_L}
\small
\setlength{\tabcolsep}{4.0pt}
\begin{tabular}{ccccccc}
\toprule
\multirow{2}{*}{$L$} & \multicolumn{2}{c}{Homo Avg} & \multicolumn{2}{c}{Hetero Avg} & \multicolumn{2}{c}{All Avg} \\
\cmidrule(lr){2-3}\cmidrule(lr){4-5}\cmidrule(lr){6-7}
 & NMI & ACC & NMI & ACC & NMI & ACC \\
\midrule
2  & 59.54 & 56.47 & 14.90 & 14.49 & 37.22 & 35.48 \\
3  & 61.97 & 60.32 & 15.53 & 15.13 & 38.75 & 37.72 \\
5  & 62.33 & 61.79 & 15.80 & 14.99 & 39.06 & 38.39 \\
8  & 61.77 & 60.78 & 16.05 & 15.20 & 38.91 & 37.99 \\
\textbf{10} & \textbf{62.46} & \textbf{63.01} & \textbf{16.51} & \textbf{16.18} & \textbf{39.49} & \textbf{39.59} \\
12 & 61.92 & 61.42 & 16.39 & 15.63 & 39.16 & 38.53 \\
\bottomrule
\end{tabular}
\end{table}

Two patterns emerge.
First, $L\!=\!2$ causes a marked drop (All NMI $37.22$, vs.\ $38.75$ for $L\!=\!3$), revealing a minimum context threshold: two hops are insufficient to capture multi-scale structural patterns---particularly on heterophilous graphs where same-cluster nodes may not share direct edges and require wider neighborhoods to be jointly identified.
Second, performance stabilizes in the range $L\in[3,12]$ and peaks at $L\!=\!10$, with $L\!=\!12$ yielding only marginal regression ($-0.33$ All NMI).
This plateau indicates that the Transformer can down-weight redundant high-order responses once the useful multi-hop context has been captured, so adding extra filter orders neither helps substantially nor causes over-smoothing collapse.
$L\!=\!10$ offers the best balance and is used throughout.

\subsection{Sensitivity to Dispersion Weight}
\label{app:ablation_lambda}

The training objective $\mathcal{L}=\mathcal{L}_{\mathrm{comp}}+\lambda\,\mathcal{L}_{\mathrm{disp}}$ (Eq.~\ref{eq:expected_full_loss}) balances within-cluster compactness against inter-prototype dispersion.
Table~\ref{tab:ablation_lambda} sweeps $\lambda$ from 0 to 1.0.

\begin{table}[h]
\centering
\caption{Sensitivity to dispersion weight $\lambda$.}
\label{tab:ablation_lambda}
\small
\setlength{\tabcolsep}{4.0pt}
\begin{tabular}{ccccccc}
\toprule
\multirow{2}{*}{$\lambda$} & \multicolumn{2}{c}{Homo Avg} & \multicolumn{2}{c}{Hetero Avg} & \multicolumn{2}{c}{All Avg} \\
\cmidrule(lr){2-3}\cmidrule(lr){4-5}\cmidrule(lr){6-7}
 & NMI & ACC & NMI & ACC & NMI & ACC \\
\midrule
0   & 61.40 & 59.54 & 14.50 & 14.54 & 37.95 & 37.04 \\
\textbf{0.1} & \textbf{62.46} & \textbf{63.01} & \textbf{16.51} & \textbf{16.18} & \textbf{39.49} & \textbf{39.59} \\
0.3 & 61.21 & 59.07 & 16.40 & 15.40 & 38.81 & 37.23 \\
0.5 & 62.43 & 61.96 & 16.33 & 15.38 & 39.38 & 38.67 \\
0.8 & 61.59 & 60.03 & 16.18 & 15.39 & 38.89 & 37.71 \\
1.0 & 62.36 & 61.07 & 16.20 & 15.38 & 39.28 & 38.22 \\
\bottomrule
\end{tabular}
\end{table}

The results reveal two regimes.
Setting $\lambda\!=\!0$ (no dispersion) produces the worst performance, most notably on heterophilous graphs (Hetero NMI $14.50$ vs.\ ${\geq}16.18$ for all $\lambda\!>\!0$), confirming that $\mathcal{L}_{\mathrm{disp}}$ is indispensable for establishing sufficient inter-cluster angular separation, consistent with the component ablation in Section~\ref{sec:ablation_components}.
Once $\lambda\!>\!0$, performance is broadly stable across $\lambda\in[0.1,1.0]$ (All NMI range $38.81$--$39.49$), demonstrating that \texttt{OFAG} is not sensitive to the precise weight as long as dispersion is active.
The mild fluctuations in the homophilous split at larger $\lambda$ reflect an expected tension: an overly strong dispersion term can conflict with the compactness objective by pushing prototypes apart before within-cluster embeddings have fully converged.
$\lambda\!=\!0.1$ achieves the best trade-off and is used as the default.

\subsection{Ablation: Homophily Coverage in the Structural Prior}
\label{app:ablation_homophily}

\vpara{Setup.}
We ablate the range of edge homophily $h$ sampled during pre-training to assess whether broad homophily coverage in the CAGM structural prior is indeed necessary.
We compare four variants: the full range $h\!\in\![0.1,\,0.9]$ used by \texttt{OFAG}, a homophily-only regime $h\!\in\![0.7,\,0.9]$, a near-neutral regime $h\!\in\![0.4,\,0.6]$, and a heterophily-only regime $h\!\in\![0.1,\,0.3]$.
All other pre-training hyperparameters are held fixed; each variant is evaluated zero-shot on the same ten benchmark datasets.

\begin{table}[h]
\centering
\caption{Ablation of the homophily range sampled during pre-training.
Average NMI/ACC over the five homophilous (Homo), five heterophilous (Hetero), and all ten graphs (All).}
\label{tab:ablation_homophily}
\small
\setlength{\tabcolsep}{4.2pt}
\begin{tabular}{lcccccc}
\toprule
\multirow{2}{*}{Homophily range} & \multicolumn{2}{c}{Homo Avg} & \multicolumn{2}{c}{Hetero Avg} & \multicolumn{2}{c}{All Avg} \\
\cmidrule(lr){2-3}\cmidrule(lr){4-5}\cmidrule(lr){6-7}
 & NMI & ACC & NMI & ACC & NMI & ACC \\
\midrule
$[0.1,\,0.9]$ (full, default) & \textbf{62.46} & \textbf{63.01} & \textbf{16.51} & \textbf{16.18} & \textbf{39.49} & \textbf{39.59} \\
$[0.7,\,0.9]$ (homo-only)     & 61.46 & 59.24 & 15.65 & 15.22 & 38.56 & 37.23 \\
$[0.4,\,0.6]$ (near-neutral)  & 57.52 & 54.37 & 15.15 & 14.88 & 36.34 & 34.63 \\
$[0.1,\,0.3]$ (hetero-only)   & 51.62 & 47.34 & 13.13 & 13.26 & 32.37 & 30.30 \\
\bottomrule
\end{tabular}
\end{table}

\vpara{Results and analysis.}
Table~\ref{tab:ablation_homophily} shows a consistent ranking across all three evaluation splits: full coverage dominates every restricted range, confirming that broad homophily diversity in the prior is essential for zero-shot generalization.

\textit{Removing heterophilous pre-training (homo-only, $h\!\in\![0.7,\,0.9]$) penalizes heterophilous benchmarks most.}
Heterophilous NMI drops from 16.51 to 15.65 ($-0.86$) while homophilous NMI declines relatively slightly (62.46$\to$61.46, $-1.00$).
This asymmetry is intuitive: without exposure to heterophilous graphs, the model develops an implicit prior that connected nodes share the same cluster, causing it to over-smooth representations when cross-cluster edges dominate.
The relative resilience on homophilous graphs further confirms that the model is not merely memorizing low-$h$ patterns; it uses the full range to learn an adaptive aggregation strategy.

\textit{Restricting to a near-neutral range ($h\!\in\![0.4,\,0.6]$) imposes a different failure mode.}
Despite covering a balanced mixture of intra- and inter-cluster edges, this variant suffers on both splits (57.52/15.15 NMI), underperforming even homo-only pre-training on homophilous graphs.
The cause is insufficient \emph{signal diversity}: graphs near $h=0.5$ carry a weak structural signal, as intra- and inter-cluster edge probabilities are nearly identical, so the model never learns to exploit strong topology cues.
At inference time, real homophilous graphs present the model with structural signals it has rarely encountered during pre-training.

\textit{Restricting to heterophilous pre-training ($h\!\in\![0.1,\,0.3]$) is the most damaging configuration.}
Homophilous NMI collapses to 51.62 ($-10.84$ vs.\ full range) and overall NMI drops by 7.12 points.
Because the majority of benchmark graphs are homophilous ($\mathcal{H}_e>0.5$), the model is exposed to a distribution that is systematically misaligned with the test domain, and the strongly heterophilous pre-training signal actively encourages the model to route information across cluster boundaries rather than within them.
Notably, heterophilous NMI also falls substantially (13.13 vs.\ 16.51), despite the training regime nominally targeting heterophilous structures.
This seemingly paradoxical result arises because real heterophilous benchmarks are not uniformly extreme: their edge homophily spans a moderate range rather than concentrating near zero.
Pre-training exclusively on strongly heterophilous graphs ($h\!\leq\!0.3$) therefore covers only a narrow slice of the heterophilous spectrum, leaving the model poorly calibrated for graphs with intermediate-to-mild heterophily that appear in the test set.

Taken together, these results validate the design of the CAGM structural prior: \emph{covering the full spectrum $h\!\in\![0.1,\,0.9]$} ensures that the model is neither biased toward smoothing same-cluster signals nor biased toward amplifying cross-cluster signals, enabling reliable zero-shot transfer across diverse real-world homophily regimes.

\section{Time Complexity Analysis}
\label{app:complexity}

We analyze the time complexity of \texttt{OFAG} at inference, decomposing it into the two stages: (1) graph diffusion pre-computation and (2) SwiFRT encoding, followed by $k$-means.
Throughout, $N$ denotes the number of nodes, $|E|$ the number of edges, $F$ the input feature dimension, $L$ the maximum filter order, $d$ the hidden dimension of the Transformer inside SwiFRT, and $K$ the number of clusters.
$L$ and $d$ are fixed hyperparameters ($L=10$, $d=8$ in our implementation).

\vpara{Graph diffusion pre-computation.}
SwiFRT requires the diffusion stack $\{\hat{A}^\ell X\}_{\ell=0}^{L}$.
Each multiplication $\hat{A}^\ell X = \hat{A}\cdot(\hat{A}^{\ell-1}X)$ is a sparse-dense matrix product with cost $O(|E|\cdot F)$.
Computing all $L$ powers therefore costs
\[
    O(L\cdot |E|\cdot F).
\]
For sparse graphs where $|E|=O(N)$ (e.g., citation or social networks), this reduces to $O(L\cdot N\cdot F)$, i.e., linear in the graph size.

\vpara{SwiFRT encoding.}
For each node--feature pair $(i,f)$, the Transformer processes a token sequence of length $L+2$ (one CLS token plus $L+1$ filter-response tokens).
Self-attention within this fixed-length sequence costs $O((L+2)^2 \cdot d) = O(L^2 d)$ per pair.
Summing over all $N\cdot F$ pairs gives
\[
    O(N\cdot F\cdot L^2\cdot d).
\]
Since $L$ and $d$ are constants, this simplifies to $O(N\cdot F)$, i.e., linear in both the number of nodes and the feature dimension.

\vpara{Post-encoding $k$-means.}
Standard $k$-means on $N$ embeddings of dimension $F$ for $T$ iterations costs $O(N\cdot F\cdot K\cdot T)$.
In practice $K$ and $T$ are small, so this term is dominated by the encoding stage for large graphs.

\vpara{Total inference complexity.}
Combining the above, the end-to-end inference complexity of \texttt{OFAG} is
\[
    O\bigl(L\cdot|E|\cdot F \;+\; N\cdot F\cdot L^2 d\bigr)
    \;=\; O\bigl(|E|\cdot F \;+\; N\cdot F\bigr)
    \;=\; O\bigl((|E|+N)\cdot F\bigr),
\]
where the simplification treats $L$ and $d$ as constants.
For sparse graphs this is $O(N\cdot F)$---linear in the graph size---which explains the order-of-magnitude speedup over iterative optimization-based baselines observed in Table~\ref{tab:efficiency}.

\vpara{Comparison with baselines.}
Dataset-specific clustering methods additionally incur the cost of per-graph optimization.
For example, GNN-based methods such as DinkNet and S3GC perform $T_{\mathrm{opt}}$ gradient steps, each of cost $O(|E|\cdot d_{\mathrm{model}})$, giving a total optimization overhead of $O(T_{\mathrm{opt}}\cdot|E|\cdot d_{\mathrm{model}})$ on top of the encoding.
Contrastive learning methods (e.g., MAGI, NS4GC) introduce an additional quadratic term: computing all pairwise node similarities or constructing a full negative-sample matrix costs $O(N^2)$ per step, which becomes prohibitive for large batch sizes.
Hyperparameter search multiplies all of these costs further.
\texttt{OFAG} eliminates both the per-graph optimization overhead and the quadratic contrastive term entirely, reducing inference to a single forward pass followed by $k$-means.

\end{document}